\documentclass[ijds,nonblindrev]{informs4}

\OneAndAHalfSpacedXI

% Sets
 % Indicator
\newcommand{\R}{\mathbb{R}} % reals
 % integers
\renewcommand{\P}[1]{\mathbb{P}\left(#1 \right)} % probability
\newcommand{\E}[1]{\mathbb{E}\left[ #1 \right]} % expectation
 % non-negative integers
 % non-positive integers
 % positive integers
 % non-negative integers
\newcommand{\N}{\mathbb{N}} % natural numbers
\renewcommand{\S}{\mathbb{S}} % shattering coefficient
 % alternative macro for complex plane
 % alternative macro for reals
 % alternative macro for integers
 % alternative macro for natural numbers
% \newcommand{\bQ}{\Q} % alternative macro for rational numbers

% Edits

% Color Coding
\definecolor{warningcol}{rgb}{.99,.1,.5}
\definecolor{todocol}{rgb}{.4,.4,.8}
\definecolor{sketchcol}{rgb}{.4,.4,.8}
\definecolor{outlinecol}{rgb}{.8,.4,.3}

% Scripts

\renewcommand{\E}{\mathbb{E}}
\newcommand{\Q}{\mathbb{Q}}
\newcommand{\eps}{\varepsilon}

\newcommand{\Qr}[1]{\mathbb{Q}\pbr{#1}}

% Brackets
\newcommand{\pbr}[1]{\left( #1\right)}
\newcommand{\sbr}[1]{\left[ #1\right]}
\newcommand{\cbr}[1]{\left\{ #1\right\}}

% Environments
\newcommand{\aln}[1]{\begin{align} #1 \end{align}}
\newcommand{\alns}[1]{\begin{align*} #1 \end{align*}}

\makeatother

% Reinforcement Learning
\renewcommand{\S}{\mathcal{S}}
\newcommand{\A}{\mathcal{A}}
\renewcommand{\P}{\mathbb{P}}
\renewcommand{\Q}{\mathbb{Q}}

\newcommand{\Qt}{\widetilde{\Q}}

\newcommand{\T}{\mathsf{T}}

\newcommand{\Lf}{\Lambda_f}

%% Algorithms
\usepackage{algorithm}
\usepackage{algpseudocode}
\usepackage{mathtools}
\usepackage[normalem]{ulem}

%% Theorems, Corollaries, Remarks, Definitions
% \newtheorem{theorem}{Theorem}%[section]
% \newtheorem{corollary}[theorem]{Corollary}%[theorem]
% \newtheorem{lemma}[theorem]{Lemma}
% \newtheorem{proposition}[theorem]{Proposition}
% \newtheorem{remark}[theorem]{Remark}
% \newtheorem{definition}[theorem]{Definition}
% \newtheorem{claim}[theorem]{Claim}

% \newtheorem{assumption}[theorem]{Assumption}

% \DeclareMathOperator*{\argmax}{arg\,max}
% \DeclareMathOperator*{\argmin}{arg\,min}
\newcommand\footnoteref[1]{\protected@xdef\@thefnmark{\ref{#1}}\@footnotemark}

\usepackage{amsmath,amssymb,amsfonts}

\usepackage{natbib}
 \bibpunct[, ]{(}{)}{,}{a}{}{,}%
\usepackage{hyperref}
\usepackage{rotating}
\usepackage{fancyvrb}
\usepackage{comment}
%% Setup of theorem styles. Outcomment only one.
%% Preferred default is the first option.
\TheoremsNumberedThrough     % Preferred (Theorem 1, Lemma 1, Theorem 2)
%\TheoremsNumberedByChapter  % (Theorem 1.1, Lema 1.1, Theorem 1.2)
\ECRepeatTheorems
\JOURNAL{Operations Research}

%% Setup of the equation numbering system. Outcomment only one.
%% Preferred default is the first option.
\EquationsNumberedThrough    % Default: (1), (2), ...
%\EquationsNumberedBySection % (1.1), (1.2), ...

% For new submissions, leave this number blank.
% For revisions, input the manuscript number assigned by the on-line
% system along with a suffix ".Rx" where x is the revision number.
\MANUSCRIPTNO{IJDS-0001-1922.65}
\usepackage{mathrsfs}
% \makeatletter
% \newtheorem*{rep@theorem}{\rep@title}
% \newcommand{\newreptheorem}[2]{%
% \newenvironment{rep#1}[1]{%
%  \def\rep@title{#2 \ref{##1}}%
%  \begin{rep@theorem}}%
%  {\end{rep@theorem}}}
% \makeatother
% \usepackage[demo]{graphicx}
\usepackage[skip=1pt,font=large]{caption}
\usepackage{subcaption}

\newenvironment{customthm}[1]
  {\innercustomthm}
  {\endinnercustomthm}

\begin{document}
% Outcomment only when entries are known. Otherwise leave as is and
%   default values will be used.
%\setcounter{page}{1}
%\VOLUME{00}%
%\NO{0}%
%\MONTH{Xxxxx}% (month or a similar seasonal id)
%\YEAR{0000}% e.g., 2005
%\FIRSTPAGE{000}%
%\LASTPAGE{000}%
%\SHORTYEAR{00}% shortened year (two-digit)
%\ISSUE{0000} %
%\LONGFIRSTPAGE{0001} %
%\DOI{10.1287/xxxx.0000.0000}%

% Author's names for the running heads
% Sample depending on the number of authors;
% \RUNAUTHOR{Jones}
% \RUNAUTHOR{Jones and Wilson}
% \RUNAUTHOR{Jones, Miller, and Wilson}
% \RUNAUTHOR{Jones et al.} % for four or more authors
% Enter authors following the given pattern:
%\RUNAUTHOR{}

% Title or shortened title suitable for running heads. Sample:
% \RUNTITLE{Bundling Information Goods of Decreasing Value}
% Enter the (shortened) title:
% \RUNTITLE{Formatting Instructions for INFORMS Author Styles}

\TITLE{On the Convergence of Modified Policy Iteration in Risk Sensitive Exponential Cost Markov Decision Processes}

% Block of authors and their affiliations starts here:
% NOTE: Authors with same affiliation, if the order of authors allows,
%   should be entered in ONE field, separated by a comma.
%   \EMAIL field can be repeated if more than one author
\ARTICLEAUTHORS{%
\AUTHOR{Yashaswini Murthy}
%,\textsuperscript{a} Second Author,\textsuperscript{b} Third Author,\textsuperscript{c} Fourth Author,\textsuperscript{c}

\AFF{Electrical and Computer Engineering and Coordinated Science Laboratory, University of Illinois Urbana-Champaign, \EMAIL{ymurthy2@illinois.edu}}
%\textsuperscript{b}School of Industrial Engineering, Good College, Collegeville, Maine 01234 \EMAIL{secauth@goodcoll.edu}; 
%\textsuperscript{c}Their Common Affiliation \EMAIL{thauth@anywhere.edu, fourauth@anywhere.edu}

%mirko.janc@informs.org
\AUTHOR{Mehrdad Moharrami}

\AFF{Computer Science, University of Iowa, \EMAIL{moharami@uiowa.edu}}

\AUTHOR{R. Srikant}

\AFF{Electrical and Computer Engineering and Coordinated Science Laboratory, University of Illinois Urbana-Champaign, \EMAIL{rsrikant@illinois.edu}}
}

\ABSTRACT{%
Modified policy iteration (MPI) is a dynamic programming algorithm that combines elements of policy iteration and value iteration. The convergence of MPI has been well studied in the context of discounted and average-cost MDPs. In this work, we consider the exponential cost risk-sensitive MDP formulation, which is known to provide some robustness to model parameters. Although policy iteration and value iteration have been well studied in the context of risk sensitive MDPs, MPI is unexplored. We provide the first proof that MPI also converges for the risk-sensitive problem in the case of finite state and action spaces. Since the exponential cost formulation deals with the multiplicative Bellman equation, our main contribution is a convergence proof which is quite different than existing results for discounted and risk-neutral average-cost problems as well as risk sensitive value and policy iteration approaches. We conclude our analysis with simulation results, assessing MPI's performance relative to alternative dynamic programming methods like value iteration and policy iteration across diverse problem parameters. Our findings highlight risk-sensitive MPI's enhanced computational efficiency compared to both value and policy iteration techniques. }

\KEYWORDS{Modified Policy Iteration, Robust Control, Risk Sensitive MDPs}
\maketitle

\section{Introduction}\label{sec1}

We consider stochastic control problems over finite state and action spaces, also known as Markov Decision Processes (MDPs). Various applications, including communication systems, autonomous car navigation, and queuing systems, are expressed as Markov Decision Processes (MDPs) for the purpose of control. Conventional methods for determining optimal policies in these MDPs typically employ dynamic programming techniques such as policy iteration, value iteration, or linear programming (\cite{bertsekasvol1}, \cite{bertsekasvol2}, \cite{puterman}). 

Reinforcement learning attempts to solve the control problem when the probability transition matrix is either unknown or the probability transition matrix is known but the state space is very large to obtain exact solutions (\cite{sutton}). A considerable portion of earlier research in this domain is centered on discounted-cost problems or average-cost Markov Decision Processes (MDPs), without accounting for potential distributional differences in the underlying probability transition kernel that characterizes the environment in which the agent operates. In this paper, our focus is on the robust version of the average-cost problem. While this paper employs the KL divergence metric to model distributional uncertainties, existing literature captures robustness through various other metrics, including the Wasserstein distance (\cite{yang2017convex,chen2019distributionally,clement2021first} and state-dependent uncertainty sets(\cite{iyengar2005robust,mannor2016robust}).

Robust control problems with linear state-space and quadratic costs have been well studied in the control theory literature (\cite{doylezhou}, \cite{dullerud}, \cite{basar}). It is also well-known that these robust control problems are closely related to the control of systems with a risk-sensitive exponential cost (\cite{whittle}). Here, we consider the finite-state, finite-action counterpart of such robust/risk-sensitive control problems (\cite{borkar2,borkar3,borkar4}). Unlike, the LQG setting in \cite{whittle}, the risk-sensitive MDP does not admit a closed-form solution even when the system model is known. 

Within the domain of risk-sensitive Markov Decision Processes (MDPs), the focus in dynamic programming has largely revolved around value iteration and policy iteration. The reinforcement learning counterparts of these algorithms have also undergone thorough investigation. This paper presents an alternative dynamic programming algorithm, termed modified policy iteration, which incorporates favorable characteristics from both value iteration and policy iteration algorithms. Value iteration involves iteratively applying the optimal Bellman operator, whereas policy iteration involves successive iterations of policy improvement followed by perfect policy evaluation. In modified policy iteration, rather than relying on perfect policy evaluation, we approximately evaluate the value function by repeatedly applying the Bellman operator corresponding to the improved policy a finite number of times.

The approximate nature of modified policy iteration can be used to obtain finite-time performance bounds for policy-based reinforcement learning, where each iteration of the RL algorithm can be viewed as an approximate policy evaluation and an approximate policy improvement (\cite{winnicki2021,winnicki2023convergence}). However, such results have been obtained only for the classical, non-robust versions of MDPs. While there has been work on the asymptotic theory of RL algorithms for risk-sensitive MDPs (\cite{borkar2,borkar3,mohar}), there has not been much work on the finite-time performance of such algorithms except in the case of value-based algorithms (\cite{fei}). While we do not study RL algorithms in this paper, we hope that the results in this paper can be used as a first step to analyze RL algorithms for robust MDPs. 

\subsubsection*{Main Contributions:}
\begin{itemize}
    \item We introduce a new dynamic programming algorithm, referred to as modified policy iteration, within the framework of risk sensitive exponential cost MDPs, which leverages properties of both value and policy iteration. 
    \item We analyze the modified policy iteration algorithm and provide asymptotic as well as finite-time convergence bounds for the sequence of policies derived from this algorithm towards the globally optimal robust policy. 
    \item The presence of partial policy evaluation means that the properties facilitating the convergence of policy and value iteration no longer apply in the context of this algorithm. Consequently, we devise novel proof techniques in order to establish the convergence analysis mentioned above. Additionally, we introduce a form of normalization within the algorithm and demonstrate its crucial role in ensuring the boundedness of consecutive value function iterates.
    \item We include simulation results that assess the performance of modified policy iteration relative to policy and value iteration. The comparison encompasses convergence performance across various parameters, including the risk sensitivity parameter and the extent of partial policy evaluation.
\end{itemize}

% \textcolor{red}{Need to include hyperlinks for each section}
\subsubsection*{Paper Outline:}
\begin{itemize}
    \item Section \ref{priorwork}: We discuss prior work on risk sensitive MDPs and modified policy iteration in risk neutral MDPs and their reinforcement learning applications.
    \item Section \ref{prelim}: We introduce the essential preliminaries, state our model and assumptions, and present the algorithm studied in this paper.
    \item Section \ref{results}: We present our convergence result for risk-sensitive modified policy iteration and outline its proof.
    \item Section \ref{proofs}: We present the proof of convergence, including all the supporting lemmas used in the convergence analysis of risk sensitive modified policy iteration. 
    \item Section \ref{simulations}: We present simulation results in this section. 
    \item Section \ref{conclusion}: Conclusions and future works are discussed in this section.
\end{itemize}
A preliminary version of this work without proofs and simulations appeared at Learning for Dynamics and Control (L4DC 2023) conference (\cite{murthy2023modified}).

\section{Prior Work}\label{priorwork}
In this section we discuss previous work in the domain of robust MDPs and the use of modified policy iteration in risk neutral MDPs. 

\subsection{Risk Sensitive MDPs}
The focus of dynamic programming techniques in the context of risk-sensitive Markov Decision Processes (MDPs) has predominantly revolved around policy and value iteration. In \cite{borkar1}, the authors provide convergence analysis for policy and value iteration in the context of countably infinite state space and finite action space. The asymptotic bounds leverage the fact that consecutive iterations of these algorithms yield monotonically decreasing risk sensitive average costs. \cite{biel} and \cite{cadena} consider finite state-action space value iteration. The analysis in \cite{cadena} depends crucially on value function iterates being related through the optimal risk sensitive Bellman Operator. They provide asymptotic convergence bounds under relaxed assumptions, i.e., they assume a solution to the optimal multiplicative Bellman operator exists but do not explicitly rely on irreducibility of the associated Markov chains. The monotonicity associated with the optimal Bellman operator, in addition to irreducibility and aperiodicity of the Markov chain is leveraged in \cite{biel} to provide finite time bounds for convergence of value iteration. This is achieved through a span seminorm contraction analysis with the Birkhoff coefficient serving as the contraction factor. However, neither of these analyses can be seamlessly extended to address the convergence of modified policy iteration. This limitation arises from the introduction of partial policy evaluation, leading to the breakdown of monotonicity arguments central to the approaches presented in \cite{cadena} and \cite{biel}. In recent literature, \cite{fei} consider risk sensitive value iteration in the context of finite horizon MDPs. They focus on optimizing the total reward over episodes of finite length and do not take into account the objective of the infinite horizon risk-sensitive average reward, which is the primary goal of this paper. \cite{gvm} utilizes time dependent risk factors and derives dynamic programming techniques to solve for the risk sensitive optimal policy when the objective is finite horizon discounted reward. 

Outside the exponential risk formulation, there are several other notions of robustness employed to account for model uncertainties in literature. In the context of discounted MDPs, a common approach is to model the ambiguity in the system parameters through state-dependent rectangular sets, where these sets are assumed to be mutually independent (\cite{iyengar2005robust}). However, this leads to extremely conservative solutions as the adversary can leverage the worst case situation for every state. \cite{goyal2021robust,mannor2016robust,xu2010distributionally} propose alternative modeling techniques which alleviate the aforementioned conservative nature of the solution. Apart from the KL divergence metric considered in the present work, the distributional difference is also quantified through the Wasserstein metric (\cite{yang2017convex,clement2021first}). A broader framework to model robustness is provided in \cite{chen2019distributionally}. 

% In general, these algorithms have one of the following limitations: they do not solve the infinite-horizon,  risk-sensitive average-cost problem that we are interested in or are not computationally feasible. For these reasons, we focus on problems where the model is known but obtaining the solution may be computationally infeasible.
% {\color{red} remove RL, include a brief disussion of other risk-sensitive formulations and say we don't do that here.}

% (i) \cite{borkar2} presents a Q-learning algorithm for the tabular case; (ii) \cite{fei} provide regret bounds for risk sensitive Q-learning and risk sensitive value iteration in the context of finite horizon MDPs (iii) \cite{gvm} address risk sensitive RL in the discounted-cost setting through the use of time dependent risk factors, (iv) \cite{mohar} provide a trajectory based policy gradient algorithm to obtain a stationary point of the risk sensitive objective function, (v) \cite{cadena} consider risk sensitive value iteration for MDPs with multiple communicating classes that permit the existence of risk sensitive average cost, (vi)  
% However, their analysis depends crucially on value function iterates being related through the optimal risk sensitive Bellman Operator, which is not the case with modified policy iteration.  

\subsection{Modified Policy Iteration in risk neutral MDPs}

% Many major successes in RL fall in this category, e.g., board game-playing AI programs such as AlphaGo, AlphaGo Zero and AlphaZero. 
Recently, several papers have studied RL problems using versions of dynamic programming techniques that are computationally more tractable compared to traditional value iteration or policy iteration (\cite{efroni}, \cite{winnicki2021}). These algorithms use two key ideas: (i) modified policy iteration: some version of policy iteration is used, where instead of exact policy evaluation, a few iterations of fixed-point iterations are performed (\cite{puterman}), and (ii) approximate policy iteration: both the policy evaluation and the few iterations of fixed-point iterations mentioned in (i) are performed approximately (\cite{bertsekasvol2}). As shown in \cite{efroni,winnicki2021}, modified policy iteration and approximate policy iteration can be used to model the concepts used in practical RL algorithms such as tree search, rollout, lookahead, and function approximation. However, all the known results in this context are for the discounted-cost or average-cost risk neutral infinite-horizon problem.

To develop an analog of the rich theory that exists for discounted-cost problems, one has to first develop a theory for modified policy iteration and approximate policy iteration in the context of risk-sensitive exponential cost MDPs. For risk-neutral average cost problems, there exists a theory of modified policy iteration (\cite{vanderwal}).  For risk-sensitive MDPs, we are unaware of any results for either modified policy iteration or approximate policy iteration. In this paper, as a first step towards developing a theory of RL for risk-sensitive problems with known but large probability transition matrices, we define the equivalent of modified policy iteration in the case of risk-sensitive MDPs and prove that it converges. In the case of discounted-cost problems and average-cost problems, the proof of convergence relies on the properties of the Bellman operator which is additive in nature. Our main contribution in this paper is to develop risk-sensitive counterparts of the results in \cite{vanderwal}, which deals with the risk-neutral setting. Specifically, the fact that the Bellman operator has multiplicative terms, instead of additive terms, means that we have to deal with potential unboundeness in the iterates of the modified policy iteration algorithm. We will detail the differences in the proof techniques when we present the mathematical results later in the paper.

% The rest of the paper is organized as follows. In Section \ref{prelim}, we present a brief introduction to risk-sensitive MDPs and in Section~\ref{formulation}, we present the modified policy iteration algorithm, including a specific normalization technique to ensure that the value function remains bounded. We note that a large class of normalizations are possible in the case of risk-neutral average-cost problems, but a specific form appears to be required in the case of the risk-sensitive cost problems. The main results is in Section~\ref{mainresult}.

\section{Preliminaries}\label{prelim}
In this section, we introduce risk-sensitive average cost MDPs and the associated multiplicative Bellman Operator. We also discuss our assumptions and present the modified policy iteration algorithm that is studied in this paper.

\subsection{Risk-Sensitive Markov Decision Process}

We consider an infinite horizon Markov decision process with finite state space $\S$, finite action space $\A$, and transition kernel $\P$. The class of deterministic policies is denoted by $\Pi = \{ f \colon \S \to \A\}$, where each policy assigns an action to each state. Given a policy $f\in\Pi$, the underlying Markov process is denoted by $\P_f:\S\rightarrow\S$, where $\P_f(s'|s) \coloneqq \P(s'|s,f(s))$ is the probability of moving to state $s'\in\S$ from state $s\in\S$ upon taking action $f(s)\in\A$. Associated with each state-action pair $(s,f(s))$, is a one-step cost which is denoted by $c_f(s) \coloneqq c(s,f(s)) \in [\underline{c},\overline{c}]$. 

To motivate our risk-sensitive cost formulation in the next subsection, we first briefly recall that, in the traditional average cost MDP the average cost $J_f$ associated with a deterministic policy $f \in \Pi$ is given by
\alns{
J_f = \lim_{t\to\infty}\frac{1}{t}\E\sbr{\sum_{k=0}^{t-1}c_f(s_k)}.
}
Here $s_k$ represents the state at time $k$ and the expectation is taken with respect to the transition probability $\P_f$ associated with the policy $f$. The limit mentioned above is well-defined and is independent of the initial state distribution, subject to certain mild conditions on the underlying Markov decision process (\cite{puterman,bertsekasvol2}). 

Let the stationary distribution over the states under policy $f$ be represented by $\eta_f$ .Equivalently, the average cost can be written in terms of the $\eta_f$ as:
\alns{
J_f = \E_{s\sim\eta_f}\sbr{c_f\pbr{s}}.
}
However, this formulation is predicated on the assumption that the nominal transition kernel representing the underlying environment is error-free. An approach to robust MDPs to account for model uncertainties is to minimize the worst-case average cost over a $\mathsf{KL}$-ball around the nominal model:
\alns{
\sup_{Q:\E_{s\sim\eta_Q}\pbr{D_{\mathsf{KL}}\pbr{Q(s,\cdot)\|\P_f(s,\cdot)}}\leq \beta} \E_{s\sim\eta_Q}\sbr{c_f\pbr{s}},
}
where $D_{\mathsf{KL}}$ denotes the Kullback-Leibler divergence, and $\beta > 0$ is the radius of the $\mathsf{KL}$-ball. The above objective considers the largest average cost across a ball of distributions centered around the nominal transition kernel corresponding to each policy. This is the robust MDP objective considered in this paper. The dual formulation of the robust MDP objective is:
\alns{
\sup_{Q\ll\P_f} \E_{s\sim\eta_Q}\sbr{c_f\pbr{s}} - \frac{1}{\alpha}\E_{s\sim\eta_Q}\sbr{D_{KL}\pbr{Q(s,\cdot)\|\P_f(s,\cdot)}},
}
where the constant $\alpha = \alpha(\beta) > 0$ depends on $\beta$ and $\ll$ denotes absolute continuity. Using the Donsker-Varadhan variational formula and Collatz–Wielandt formula, it can be shown that optimizing the robust MDP objective is equivalent to minimizing 
\aln{ 
\Lambda_f(\alpha) = \lim_{t\to\infty} \frac{1}{t} \ln\left({\E\sbr{\exp\pbr{\sum_{k=0}^{t-1} \alpha c_f(s_k)} } }
\right),
\label{lambda}
}
where the expectation is taken with respect $\P_f$. The existence of the above limit is a consequence of the Perron-Frobenius Theorem, whose details can be found in \cite{mohar}, \cite{basu}. $\Lambda_f(\alpha)$ is known as the risk sensitive average cost. Similar to $J_f$, the value of $\Lambda_f(\alpha)$ does not depend on the initial state $s_0$. The level of risk aversion is quantified by $\alpha$, often referred to as the risk factor. Higher values of $\alpha$ correspond to increased risk aversion. Note that in the limit as $\alpha\to 0$, the risk-sensitive average cost converges to the risk neutral average cost, i.e., $$\lim_{\alpha \to 0} \Lambda_f(\alpha) = J_f.$$ For simplicity, from now on, we fix $\alpha > 0$ and write $\Lambda_f$ instead of $\Lambda_f(\alpha)$.

Analogous to the risk-neutral average cost formulation, the objective \eqref{lambda} can be expressed as the solution to a Bellman Equation \eqref{mbe}, but one that is in multiplicative form: 
\aln{ 
e^{\Lf}e^{V_f(i)} &= e^{\alpha c_f(i)} \sum_{j\in\S} \P_f(j|i) e^{V_f(j)}, \quad \forall \ i \in \S.\label{mbe}}
The relative value function $e^{V_f}$ is the eigenvector corresponding to the Perron-Frobenius eigenvalue $e^{\Lf}$ associated with the non negative matrix $[e^{\alpha c_f(i)}\P\pbr{j|i,f(i)}]_{i,j}$. From the above form of the multiplicative Bellman Equation, we can see that the value function is unique up to a multiplicative constant, thus being referred to as the relative value function.

The multiplicative Bellman operator corresponding to a policy $f$, is an operator $\T_f:\mathbb{R}_+^{|\S|}\to \mathbb{R}_+^{|\S|}$ defined as: 
\begin{equation*}
    \T_f e^{V}(i) = e^{\alpha c_f(i)}\sum_{j\in\S}\P_f(j|i)e^{V(j)}.
\end{equation*}
The optimal multiplicative Bellman operator $\T:\mathbb{R}_+^{|\S|}\to \mathbb{R}_+^{|\S|}$ is defined as: 
\begin{equation}
    \T e^{V}(i) = \min_{f\in\Pi}\T_f e^{V}(i),\qquad \forall i\in\S.
    \label{optimalmbo}
\end{equation}
The optimal risk sensitive average cost is defined as the minimum risk sensitive average cost across all policies, i.e.,
\aln{ 
\Lambda^* = \min_{f\in\Pi} \Lf = \min_{f\in\Pi} \lim_{t\to \infty} \frac{1}{t} \ln\pbr{ \E\sbr{ \exp\pbr{\sum_{k=0}^{t-1}\alpha c_f(s_k)} \bigg|\,  s_0 = i }}.
}
\noindent Let $f\in\Pi$ denote the deterministic policy for which $\Lf = \Lambda^*$, and let $e^{V^*} = e^{V_f}$ denote its relative value function. It can be shown that the pair $(\Lambda^*,e^{V^*})$ is the unique solution (up to multiplicative constant of $e^{V^*}$) to the following equation:
\begin{equation}\label{acoe}
    e^{\Lambda^*} e^{V^*(i)} =   \min_{f\in\Pi} e^{\alpha c_f(i)}\sum_{j\in\S}\P_f(j|i)e^{V^*(j)}, \quad 
    \forall \ i \in \S.
\end{equation}

\subsection{Assumptions}
Our assumptions on the model and policy class are summarized below. 
\begin{assumption}
    We assume that:
    \begin{itemize}
        \item The state space and the action space are finite.
        \item The one-step cost associated with each state-action pair $(s, a)\in\S\times\A$ is deterministic and bounded.
        \item The Markov process associated with each deterministic policy $f\in\Pi$ is irreducible.
    \end{itemize}
\end{assumption}
The irreducibility condition is somewhat restrictive. However, in MDPs where this condition is not satisfied, one can replace $\P$ with $\Tilde{\P} = (1-\epsilon)\P + \epsilon \boldsymbol{1}\boldsymbol{1}^\top$ where $\boldsymbol{1}$ is the all-one column vector and $\epsilon > 0$ is a fixed constant. This leads to an $O(\epsilon)$ suboptimality. 
% \color{red} Check this. \color{black}
While aperiodicity of the Markov chain associated with each policy is not an assumption, it is a relatively straightforward to transform the original MDP into a new MDP where every policy induces an aperiodic Markov chain. In the literature, such a transformation is referred to as the aperiodicity or the lazy-chain transformation. This transformation is essential to guarantee the convergence of the algorithm under consideration in this paper and is therefore described below.

Fix a constant $\kappa\in(0,1)$ and transform the dynamics of the MDP as follows:
\begin{itemize}
    \item The transformed cost is given by:
    \begin{equation*}
        d_f(i) = \frac{1}{\alpha}\log((1-\kappa)e^{\alpha c_f(i)}+\kappa),\qquad \forall i \in\S.
    \end{equation*}

    \item The transformed transition probabilities are given by:
    \begin{equation*}
        \Q(j|i,a) =\frac{(1-\kappa)e^{\alpha c(i,a)}\P(j|i,a)+\kappa\boldsymbol{1}(i=j)}{(1-\kappa)e^{\alpha c(i,a)}+\kappa},\qquad\forall(i,a)\in\S\times\A,
    \end{equation*}
    where $\boldsymbol{1}(i=j)$ is the indicator function. For any policy $f\in\Pi$, $\Q_f(j|i)$ denotes the probability of moving to state $j\in\S$ from state $i\in\S$ upon taking action $f(i)$.
\end{itemize}
Notice that for all $(i,a)\in\S\times\A$, we have $\Q(i|i,a)\geq\frac{\kappa}{(1-\kappa)e^{\alpha \overline{c}}+\kappa}>0$. In particular, the probability of staying in the same state under all policies is non-zero.  This aids in diminishing the mixing time of the MDP. 
The following theorem from \cite{cadena} establishes a one-to-one correspondence between the optimal risk sensitive average cost and the associated relative value function in the original MDP and the transformed MDP. Hence, finding an optimal policy for the transformed dynamics is equivalent to finding an optimal policy for the original MDP. 
\begin{theorem}\label{theorem2}
Given $\kappa\in(0,1)$, we have the followings:
\begin{enumerate}
    \item Given $(\Lambda^*,e^{V^*})$ satisfies \eqref{acoe}, define
    \begin{equation*}
        \tilde{\Lambda}^*=\log((1-\kappa)e^{\Lambda^*}+\kappa).
    \end{equation*}
    
    Then $(\tilde{\Lambda}^*,e^{V^*})$ solves the following multiplicative Bellman equation:
    \begin{equation}\label{acoe1}
    e^{\tilde{\Lambda}^*} e^{V^*(i)} = \min_{f\in\Pi} e^{\alpha d_f(i)}\sum_{j\in\S}\Q_f(j|i)e^{V^*(j)}, \quad 
    \forall \ i \in \S.    
\end{equation}
    \item Conversely, given $(\tilde{\Lambda}^*,e^{V^*})$ satisfies \eqref{acoe1}, 
    then
    \begin{equation*}
        e^{\tilde{\Lambda}^*}\geq\kappa.
    \end{equation*}
    
    Define 
    \begin{equation}\label{ogcost}
        {\Lambda}^*=\log\left(\frac{e^{\tilde{\Lambda}^*}-\kappa}{1-\kappa}\right).
    \end{equation}
    
    Then the pair $(\Lambda^*,e^{V^*})$ satisfies \eqref{acoe}. 
    \item Further, the transformed and original problems possess the same optimal policies.
\end{enumerate}
\end{theorem}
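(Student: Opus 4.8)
The plan is to reduce both directions to a single algebraic identity that relates the transformed multiplicative Bellman operator to the original one, evaluated at the \emph{same} vector $e^{V}$. From the definitions of $d_f$ and $\Q$ one reads off
\begin{equation*}
e^{\alpha d_f(i)} = (1-\kappa)e^{\alpha c_f(i)} + \kappa,
\qquad
e^{\alpha d_f(i)}\,\Q_f(j|i) = (1-\kappa)e^{\alpha c_f(i)}\,\P_f(j|i) + \kappa\,\mathbf{1}(i=j),
\end{equation*}
so that for every $e^{V}\in\mathbb{R}_+^{|\S|}$ and every $f\in\Pi$,
\begin{equation*}
e^{\alpha d_f(i)}\sum_{j\in\S}\Q_f(j|i)e^{V(j)}
= (1-\kappa)\,e^{\alpha c_f(i)}\sum_{j\in\S}\P_f(j|i)e^{V(j)} + \kappa\,e^{V(i)}.
\end{equation*}
Since $\kappa\,e^{V(i)}$ is independent of $f$ and $1-\kappa>0$, minimizing over $f$ commutes with this affine reweighting (a positive scaling and an additive constant), and I obtain the master identity
\begin{equation*}
\min_{f\in\Pi} e^{\alpha d_f(i)}\sum_{j\in\S}\Q_f(j|i)e^{V(j)}
= \kappa\,e^{V(i)} + (1-\kappa)\,\min_{f\in\Pi} e^{\alpha c_f(i)}\sum_{j\in\S}\P_f(j|i)e^{V(j)},
\qquad \forall\, i\in\S.
\end{equation*}

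For Part 1, I would substitute $e^{V}=e^{V^*}$ into the master identity and invoke \eqref{acoe}, i.e. $\min_{f}e^{\alpha c_f(i)}\sum_j\P_f(j|i)e^{V^*(j)}=e^{\Lambda^*}e^{V^*(i)}$. The right-hand side collapses to $\big(\kappa+(1-\kappa)e^{\Lambda^*}\big)e^{V^*(i)}$, and since $\kappa+(1-\kappa)e^{\Lambda^*}=e^{\tilde{\Lambda}^*}$ by the very definition of $\tilde{\Lambda}^*$, this is exactly \eqref{acoe1}.

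For Part 2, I would run the argument backwards. Because $e^{V^*}$ is a vector of exponentials it is strictly positive, so $(1-\kappa)e^{\alpha c_f(i)}\sum_j\P_f(j|i)e^{V^*(j)}>0$ for every $f$; combining this with \eqref{acoe1} and the master identity gives $e^{\tilde{\Lambda}^*}e^{V^*(i)}>\kappa\,e^{V^*(i)}$, and dividing by $e^{V^*(i)}>0$ yields $e^{\tilde{\Lambda}^*}>\kappa$, which proves the claimed inequality and makes the logarithm in \eqref{ogcost} well defined. Rearranging the same relation,
\begin{equation*}
\min_{f\in\Pi}e^{\alpha c_f(i)}\sum_{j\in\S}\P_f(j|i)e^{V^*(j)}
= \frac{e^{\tilde{\Lambda}^*}-\kappa}{1-\kappa}\,e^{V^*(i)},
\end{equation*}
and substituting $\Lambda^*=\log\big((e^{\tilde{\Lambda}^*}-\kappa)/(1-\kappa)\big)$ from \eqref{ogcost} recovers \eqref{acoe}.

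I do not expect a genuine analytic obstacle here; the content is (i) recognizing that this particular cost/kernel transformation is precisely the one that turns the product $e^{\alpha c_f(i)}\P_f(j|i)$ into an affine function of itself with an $f$-independent constant, and (ii) checking that the minimization passes through that affine reweighting. The only points that need a word of care are the strict positivity of $e^{V^*}$ (used to divide) and the resulting bound $e^{\tilde{\Lambda}^*}\ge\kappa$ needed for \eqref{ogcost} to make sense; both are immediate. Finally, uniqueness up to a positive multiplicative constant of $e^{V^*}$ carries over from the uniqueness already stated for \eqref{acoe}, because the map $\Lambda^*\leftrightarrow\tilde{\Lambda}^*$ is a strictly increasing bijection on the relevant range and both transformations leave $e^{V^*}$ untouched.
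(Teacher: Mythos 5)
Your proposal is correct, and it is worth noting that the paper itself does not supply a proof of this theorem at all: it simply defers to the reference \cite{cadena}. Your self-contained argument is therefore a genuine addition rather than a restatement. The heart of it, the ``master identity''
\begin{equation*}
\min_{f\in\Pi} e^{\alpha d_f(i)}\sum_{j\in\S}\Q_f(j|i)e^{V(j)}
= \kappa\,e^{V(i)} + (1-\kappa)\,\min_{f\in\Pi} e^{\alpha c_f(i)}\sum_{j\in\S}\P_f(j|i)e^{V(j)},
\end{equation*}
follows immediately from the definitions of $d_f$ and $\Q$, and the commutation of $\min_f$ with the affine map is legitimate because the additive term $\kappa e^{V(i)}$ is independent of $f$ and the scaling $1-\kappa$ is positive. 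Both directions of the theorem then reduce to substitution, and your positivity argument actually yields the strict inequality $e^{\tilde{\Lambda}^*}>\kappa$, slightly stronger than the stated $e^{\tilde{\Lambda}^*}\geq\kappa$. As a bonus, the master identity shows the minimizing policy at each state is the same for both Bellman equations, which is exactly the claim the paper leaves to the reader (``it can also be verified that both the transformed and original problems possess the same optimal policies''). The only caveat is that your argument verifies that the transformed pair \emph{satisfies} the stated equations; the uniqueness assertion you append at the end inherits from the uniqueness already assumed for \eqref{acoe} and is not itself part of what the theorem asks you to prove.
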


\subsection{Algorithm}
The modified policy iteration algorithm for risk sensitive exponential cost MDPs which have been transformed using the aperiodicity transformation is stated in Algorithm~\ref{Alg: MPI}. The algorithm takes as input a sequence of natural numbers $(m_i: i \in \mathbb{N})$ such that $m_i \geq 1$ and a vector $V_0' \in \R^n$ such that $\sum_{i\in\S}e^{V_0'(i)} = 1$.  The $m_i$ that are input in the algorithm can vary for every iteration $i$ but are predetermined and not random. The policy evaluation step becomes more accurate as the value of $m_i$ increases.

\begin{algorithm}
\caption{Risk Sensitive Modified Policy Iteration}\label{Alg: MPI}
\begin{algorithmic}[1]
\Require $(m_i: i \in \mathbb{N})$, $V'_0$. 
\For{$k = 0,1,\cdots$}
\State Set $f_{k+1}(i) = \argmin_{f\in\Pi} e^{\alpha d_f(i)} \sum\limits_{j\in\S} \Q({j\mid i, f(i)}) e^{V'_k(j)} \quad \forall i \in \S$ \Comment{Policy Improvement} \newline
\indent Define $\pbr{\T_{f_{k+1}} e^{V'_k}}(i) = e^{\alpha d_{f_{k+1}}(i)} \sum\limits_{j\in\S} \Q({j \mid i, f_{k+1}(i)}) e^{V'_k(j)} $ 
\State $e^{V_{k+1}(i)} \gets \pbr{\T_{f_{k+1}}^{m_k} e^{V'_k}}(i)$ for all $i \in \S$. \Comment{Partial Policy Evaluation} 
\State $e^{V'_{k+1}(i)} \gets \frac{e^{V_{k+1}(i)}}{\sum_i e^{V_{k+1}(i)}}$ for all $i \in \S$ \Comment{Normalization}
\EndFor
\State{\bfseries return} $(e^{V_k'}:k\geq 0)$ and $(f_k:k\geq 1)$
\end{algorithmic}
\end{algorithm}

\section{Results}\label{results}

The main result of the paper shows that Algorithm \ref{Alg: MPI} converges and also provides finite-time performance guarantees.
\begin{theorem}\label{thm_convg}
Recall the definition of the optimal multiplicative Bellman operator in \eqref{optimalmbo}. Let $e^{V'_n}$ and $f_n$, for all $n\in\N$ be the outputs of Algorithm \ref{Alg: MPI}. Then there exist $\gamma$ and $k\in\N$ such that $0<\gamma<1$ and for each $n$, the iterates of risk sensitive modified policy iteration satisfy:
\begin{equation}
    \left(\max_{i\in\S}\frac{\T e^{V'_n(i)}}{e^{V'_n (i)}} - e^{\Tilde{\Lambda}^*} \right) \leq \pbr{1-\gamma}\pbr{\max_{i\in\S}\frac{\T e^{V'_{n-k} (i)}}{e^{V'_{n-k} (i)}} - e^{\Tilde{\Lambda}^*}},
\end{equation}
where $e^{\Tilde{\Lambda}^*}$ is defined in \eqref{acoe1}.
Consequently, the risk sensitive average cost iterates converge to $e^{\Tilde{\Lambda}^*}$, i.e.
\aln{ 
\lim_{n\to\infty} e^{\Tilde{\Lambda}_{f_{n}}} = e^{\Tilde{\Lambda}^*}.
}
\end{theorem}
Proof can be found in Subsection \ref{subsection:proof}.

We will now outline the motivation behind our proof, highlight its distinctions from the proofs related to the convergence of other conventional dynamic programming techniques, and provide an overview of its key components. 

\subsection{Key Ideas and Proof Outline}
Compared to dynamic programming techniques such as value iteration or policy iteration, the convergence analysis of modified policy iteration in risk sensitive MDPs faces additional challenges due to partial policy evaluation in step 3 of algorithm \ref{Alg: MPI}. In this section we present the key ideas in our technique that circumvent these challenges. 

In the context of value iteration, it is well known that the consecutive value function iterates possess a span-seminorm contraction property (\cite{biel}, \cite{borkar1}). More precisely, let $g_0,h_0\in\R^{n}$. Then there exist constants $\tau,k,r$ such that  $0<\tau<1$, and $ \mathbb{N} \ni k,r <\infty $ such that
\aln{
     \text{sp}\pbr{g_k-h_k} \leq \tau^r\text{sp}\pbr{g_0-h_0},
\label{eqspan}}
where the span of a vector $v$ is defined as $\text{sp}(v) = \max_i v(i) - \min_i v(i)$ and
\aln{
g_k(i) = \min_{f\in\Pi}\cbr{\alpha d_f(i)+\ln\pbr{\sum_{j\in\S} \Q(j|i,f(i))e^{g_{k-1}(y)}}},
\label{eqg}}
\aln{
h_k(i) = \min_{f\in\Pi}\cbr{\alpha d_f(i)+\ln\pbr{\sum_{j\in\S} \Q(j|i,f(i))e^{h_{k-1}(y)}}}.
\label{eqh}}
Note that \eqref{eqg} and \eqref{eqh} are value iteration updates corresponding to value function vectors $g_{k-1}$ and $h_{k-1}$ respectively. A similar contraction in the sup norm is satisfied in the discounted-cost setting, where the discount factor serves as the source of contraction. The span contraction factor $\tau$ in \eqref{eqspan} is the Birkhoff coefficient of a matrix, which is dependent on the risk sensitivity factor $\alpha,$ the single step cost vector and the underlying probability transition kernel. It is worth noting that irreducible and aperiodic Markov chain under any policy are needed to ensure $\tau<1.$ To the best of our knowledge, all prior works which obtain finite time bounds for dynamic programming algorithms for risk sensitive MDPs seem to use these assumptions. When $h_0$ is the optimal risk sensitive value function, \eqref{eqspan} ensures convergence of the iterates $g_k$, since $h_k$ are all identical upto a multiplicative constant (\cite{biel}). The inequality \eqref{eqspan} is a consequence of consecutive iterates of $g_k,h_k$ being related through the optimal multiplicative Bellman operator, which by virtue of its definition possesses a monotonicity property. This property is also central to the convergence analysis of risk-sensitive value iteration discussed in \cite{cadena}, even though they provide asymptotic convergence bounds by proving that the difference between two consecutive iterates tends to zero. 

However, such monotonicity fails to hold when partial policy evaluation is involved. In the context of risk-sensitive modified policy iteration (MPI), at each iteration $k$, the iterate $e^{V'_{k+1}}$ is derived by applying the multiplicative Bellman operator associated with the policy $f_{k+1}$ to the preceding iterate $e^{V'_k}$, a total of $m_k$ times. In this computation, only the first application of the Bellman operator corresponds to the optimal Bellman operator acting on $e^{V'_k}$, since $\T_{f_{k+1}}e^{V'_k} = \T e^{V'_k}$ (from step 2 of algorithm \ref{Alg: MPI}). This computation is followed by applying $\T_{f_{k+1}}$ repeatedly for a total of $m_k-1$ times - which is the step that distinguishes the algorithm from value iteration. Hence the monotonicity arguments leveraged for the finite time convergence analysis of value iteration break down in the context of modified policy iteration. As a corollary to this observation, one can even view the modified policy iteration algorithm as a broader form of value iteration, since setting $m_k=1$ for all $k\in\N$ yields the value iteration algorithm. Consequently, the convergence analysis presented in this paper also implies finite-time convergence bounds for value iteration

The risk sensitive policy iteration algorithm can be obtained by replacing step 3 in Algorithm~\ref{Alg: MPI} with complete policy evaluation (this step involves determining the Perron Frobenius eigenvector of a function of the transition kernel). In this setting, there exists a state $i$ such that,
\alns{
\pbr{\T_{f_{k+1}}e^{V_{f_k}}}(i) &:= e^{\alpha d_{f_{k+1}}(i)} \sum\limits_{j\in\S} \Q({j \mid i, f_{k+1}(i)}) e^{V_k(j)}\\
&\stackrel{\text{(a)}}{<} e^{\alpha d_{f_{k}}(i)} \sum\limits_{j\in\S} \Q({j \mid i, f_{k}(i)}) e^{V_k(j)}\\ & = e^{\Tilde{\Lambda}_{f_{k}}}e^{V_{f_k}(i)},
}
where (a) follows from policy improvement. The above strict inequality (a) is true for some state $i\in\S$ if the algorithm is yet to converge. Upon repeatedly applying $\T_{f_{k+1}}$ operator in the above inequality, in the infinite limit of its application, we obtain that $e^{\Tilde{\Lambda}_{f_{k+1}}}<e^{\Tilde{\Lambda}_{f_{k}}}$. In the setting of finite state-action spaces with deterministic policies, this monotonicity of risk sensitive average costs is sufficient to ensure convergence. However, this proof technique relies on complete policy evaluation and once again is ineffective in the context of risk sensitive modified policy iteration due to step 3 in Algorithm \ref{Alg: MPI}. Despite modified policy iteration lacking value function iterates that are related through contracting span seminorms, or risk sensitive average cost iterates that are monotonically decreasing, the ratio $\frac{\T e^{V'_k(i)}}{e^{V'_k(i)}}$ exhibits interesting properties that can be used to circumvent the above mentioned issues. The inspiration for our proof comes from a related proof of convergence of modified policy iteration for average cost MDPs in \cite{vanderwal}. However, unlike \cite{vanderwal} the fact that we have to deal with ratios instead of differences presents new challenges. We will soon show that a function of the ratio $\frac{\T e^{V'_k(i)}}{e^{V'_k(i)}}$ possesses monotonicity properties and contraction properties which aid in the convergence analysis. We now present an outline of the proof below.

\textbf{Step 1: Characterizing the impact of applying the optimal multiplicative Bellman operator once}  
In order to take advantage of the contraction and monotonicity properties pertaining to the ratio $\frac{\T e^{V'_k(i)}}{e^{V'_k(i)}}$, it is necessary to be able to characterize its relationship to the risk sensitive average cost associated with the optimizing policy associated with $e^{V'_k}$, as well as its relationship to the risk sensitive optimal cost obtained as the solution to the optimal multiplicative Bellman optimality equation \eqref{acoe1}. This result is captured in Lemma~\ref{lem_VI}, which can be utilized to extend convergence properties of  $\frac{\T e^{V'_k(i)}}{e^{V'_k(i)}}$ to the associated risk sensitive average costs. 

\textbf{Step 2: Establishing an appropriate monotonicity}
We show that $\max_{i\in\S}\frac{\T e^{V'_k(i)}}{e^{V'_k(i)}}$ is a monotonically decreasing sequence. Note that from \eqref{acoe1} we know that $\frac{\T e^{V^*(i)}}{e^{V^*(i)}} = e^{\Tilde{\Lambda^*}}$ for all $i\in\S$. Even though $e^{V'_k}$ are not value functions in the sense of \eqref{mbe}, the monotonically decreasing $\max_{i\in\S}\frac{\T e^{V'_k(i)}}{e^{V'_k(i)}}$ alludes to convergence of the algorithm. Lemma \ref{lemm_mono} contains the proof of this observation. It is important to note that this monotonicity is observed only for $\max_{i\in\S}\frac{\T e^{V'_k(i)}}{e^{V'_k(i)}}$ and not for $\min_{i\in\S}\frac{\T e^{V'_k(i)}}{e^{V'_k(i)}}$. If $\min_{i\in\S}\frac{\T e^{V'_k(i)}}{e^{V'_k(i)}}$ were to be monotonically increasing as well, we would have a span contraction for these iterates which would have simplified the proof. 

\textbf{Step 3: Identifying a source of contraction} In order to establish finite time convergence bounds it is necessary to identify a source of contraction between subsequent iterates. In discounted cost MDPs, it is achieved by the discount factor. In average cost MDPs it is achieved by the least value of the stationary measure across all states and policies. In the risk sensitive value iteration setting, it is the Birkhoff's coefficient of a model-dependent matrix. 

Here, two conditions need to be satisfied in order to obtain a valid source of contraction: (i) aperiodicity and irreducibility of the transition kernel under all policies and (ii) uniform boundedness of all iterates $e^{V'_k}$ away from zero. Condition (i) is satisfied by virtue of our modeling assumptions. To ensure condition (ii), we use a specific form of normalization (step 4 of Algorithm \ref{Alg: MPI}.) In the case of risk-neutral average-cost modified policy iteration, the normalization step generally involves subtracting the value function at some fixed state from the rest of the states. This ensures that the value function iterates do not diverge with repeated execution of the algorithm. However, a similar normalization trick where one divides the value function for all states by the value function of one state does not seem to work for risk sensitive modified policy iteration; therefore we divide by the sum of the value function of all states. This allows us to ensure the value functions do not diverge and that they are uniformly bounded away from zero.

Lemma \ref{lem2} utilizes condition (i) and Lemma \ref{lemm_bound} proves the boundedness of $e^{V'_k(i)}$ to establish a source of contraction necessary for convergence analysis in the main proof. 

\textbf{Step 4: Convergence of risk sensitive modified policy iteration} Lemma \ref{lem_VI} and the monotonicity of $\max_{i\in\S}\frac{\T e^{V'_k(i)}}{e^{V'_k(i)}}$ along with the source of contraction are utilized to prove the exponential convergence of $\max_{i\in\S}\frac{\T e^{V'_k(i)}}{e^{V'_k(i)}}$ to $e^{\Tilde{\Lambda^*}}.$ Lemma \ref{lem_VI} is further leveraged to prove the convergence of $e^{\Tilde{\Lambda_{f_k}}}$ to $e^{\Tilde{\Lambda^*}}$ and $\min_{i\in\S}\frac{\T e^{V'_k(i)}}{e^{V'_k(i)}}$ to $e^{\Tilde{\Lambda^*}}$ thereby proving the convergence of modified policy iteration.

\section{Proofs}\label{proofs}
We present the lemmas and their proofs and also provide the main proof presented in the proof outline in the previous section.  
\subsection{Step 1: Characterizing the impact of applying the optimal multiplicative Bellman operator once}
Let $g_n$ be defined as 
\aln{ 
g_n(i) &:= \frac{\T e^{V_n'}(i) }{e^{V_n'}(i)},    \label{def1} 
}
and set $u_n$ and $\ell_n$ as
\aln{
u_n &:= \max_{i\in\S} \pbr{ g_n(i) }  \label{def2}\\
\ell_n &:= \min_{i\in\S} \pbr{g_n(i)}.
\label{def3}}

\begin{lemma}\label{lem_VI}
Let $\Tilde{\Lambda}^*$ be the optimal risk sensitive average cost associated with the MDP considered in Algorithm \ref{Alg: MPI}. Then $\forall n>0$:
\aln{ 
\ell_n \leq e^{\Tilde{\Lambda}^*} \leq e^{\Tilde{\Lambda}_{f_{n+1}}} \leq u_n.
}
\end{lemma}
\proof{Proof.}
    \begin{itemize}
        \item $u_n \geq e^{\Tilde{\Lambda}_{f_{n+1}}}$. Recall by definition that
        \alns{ 
        u_n &= \max_i \pbr{\frac{\T e^{V_n'(i)}}{e^{V_n'(i)}}} \\ &\geq
        \frac{\T e^{V_n'(i)}}{e^{V_n'(i)}} \\ &= \frac{\T_{f_{n+1}} e^{V_n'}(i)}{e^{V_n'}(i)} \quad \forall i\in\S.
        }

        Therefore, we have
        \alns{ 
        u_n e^{V_n'(i)} \geq e^{d_{f_{n+1}}(i)}\sum_{j\in\S} \Q_{f_{n+1}} \pbr{ j\mid i} e^{V_n'(j)},
        }
        
        and thus
        \alns{ 
        u_n e^{V_n'} \geq \Qt_{f_{n+1}} \pbr{e^{V_n'}}.
        }
        
        where ${\Qt_{f_{n+1}}}{(i,j)} = e^{d_{f_{n+1}}}(i)\Q_{f_{n+1}} \pbr{ j\mid i}.$ 
    
    % \mr{$\Qt$ is not transition probability, the conditional notation does not make sense for it.}.
        Consequently, it follows that
        \alns{ 
        u_n \Qt_{f_{n+1}}^k \pbr{e^{V_n'}} \geq \pbr{\Qt_{f_{n+1}}}^{k+1} e^{V_n'}
        }
        
        and
        \alns{ 
        \frac{u_n}{e^{\Tilde{\Lambda}_{f_{n+1}}}} \pbr{\frac{\Qt_{f_{n+1}}}{e^{\Tilde{\Lambda}_{f_{n+1}}}}}^k e^{V_n'} \geq \pbr{\frac{\Qt_{f_{n+1}}}{e^{\Tilde{\Lambda}_ {f_{n+1}}}}}^{k+1} e^{V_n'}.
        }
      
        Note that
    
        \alns{ 
        \lim_{k\to\infty} \frac{u_n}{e^{\Tilde{\Lambda}_{f_{n+1}}}} \pbr{\frac{\Qt_{f_{n+1}}}{e^{\Tilde{\Lambda} f_{n+1}}}}^k e^{V_n'} \geq \lim_{k\to\infty} \pbr{\frac{\Qt_{f_{n+1}}}{e^{\Tilde{\Lambda}_{f_{n+1}}}}}^{k+1}e^{V_n'}
        }.
    
         Since $\Q$ satisfies the conditions of the Perron-Frobenius Theorem, by definition, so does $\widetilde{Q}$. The vector $e^{V'_n}$ consists of all positive elements and hence, in the limit of $k\longrightarrow{\infty}$, due to power iteration, the following holds true:
      
        \alns{ 
        \frac{u_{n}}{e^{\Tilde{\Lambda}_{f_{n+1}}}} z \geq z.
        }
        
        As $\Qt_{f_{n+1}}$ and $e^{V_n'}$ are both non negative, the resulting $z$ is a non-zero vector containing non negative elements. Hence we obtain,
        \alns{ 
        u_n \geq e^{\Tilde{\Lambda}_{f_{n+1}}},
        }
        
        as desired.
    
        \item $e^{\Tilde{\Lambda}^*} \leq e^{\Tilde{\Lambda}_{f_{n+1}}}$. 
        Recall that
        \alns{ e^{\Tilde{\Lambda}^*} e^{V^*(i)} &= \min_{f\in\Pi} e^{\alpha d_f(i)} \sum_{j\in\S} \Qr{j\mid i, f(i)} e^{V^*(j)}
        \\
        &\leq e^{\alpha d_{f_{n+1}} (i)} \sum_{j\in\S} \Qr{j\mid i,f_{n+1}(i) } e^{V^*(j)}.
        }
        
        Therefore, we have
        \alns{ 
        e^{V^*(i)} &\leq \frac{e^{ \alpha d_{f_{n+1}}(i)} \sum_{j\in\S} \Qr{j \mid i,f_{n+1}(i)} e^{V^*(j)} }{e^{\Tilde{\Lambda}^*}}\\
        &= \frac{e^{\alpha d_{f_{n+1}}(i) } \E\sbr{ e^{V^*(x_1)} \mid x_0 = i, f_{n+1}} }{e^{\Tilde{\Lambda}^*}}\\
        & \leq \frac{e^{\alpha d_{f_{n+1}}(i)}}{e^{\Tilde{\Lambda}^*}} \cdot \E\sbr{ \frac{e^{\alpha d_{f_{n+1}}(x_1)}}{e^{\Tilde{\Lambda}^*}} \cdot \E\sbr{e^{V^*(x_2)}\mid x_1,f_{n+1}} \Big| \ x_0=i,f_{n+1}}.
        }
    
        Iterating, we get
        \alns{ 
        e^{V^*(x_0)} \leq \E\sbr{ \frac{ e^{\sum_{i=0}^{k-1} \alpha d_{f_{n+1}}(x_i) }V^*(x_k)}{ \pbr{e^{\Tilde{\Lambda}^*}}^k} \Bigg| \ x_0}.
        }
        
        % \mr{you are missing $V^*$ on the right hand side.}
        Since $V^*(x_k) \leq M < \infty$ for all $x_k \in \S$, we have
        \alns{ 
        k \cdot \Tilde{\Lambda}^* + V^*(x_0) & \leq \ln\pbr{ \E_{x_0} \sbr{e^{\alpha\sum_{i=0}^{k-1} d(x_i,f_{n+1}(x_i))} }} + \ln(M),
        }
        
        or equivalently
        \alns{ 
        \Tilde{\Lambda}^* + \frac{V^*(x_0)}{k} \leq \frac{1}{k} \ln\pbr{ \E_{x_0} \sbr{e^{\alpha\sum_{i=0}^{k-1} d(x_i,f_{n+1}(x_i))}}} + \frac{M}{k}.
        }
        
        As $k \to \infty$, we obtain
        \alns{ 
        \Tilde{\Lambda}^* \leq \Tilde{\Lambda}_{f_{n+1}}.
        }
        
        By monotonicity of the exponential function, we conclude as desired that
        \alns{ 
        e^{\Tilde{\Lambda}^*} \leq e^{\Tilde{\Lambda}_{f_{n+1}}}.
        }
        %%%%%%%
        \item $\ell_n \leq e^{\Tilde{\Lambda}^*}$. Recall that $e^{\Tilde{\Lambda}^*}$ satisfies the following equation:
        \alns{ 
        e^{\Tilde{\Lambda}^*} e^{V^*} = \min_{f\in\Pi} e^{\alpha d_{f}} \Q_{f} \pbr{e^{V^*}}.
        }
        
        Let the minimising policy be $f^*$. We have
        \alns{ 
        \ell_n &= \min_i \frac{\T e^{V_n'}(i)}{e^{V_n'(i)}}\\ &\leq \frac{\T e^{V_n'}(i)}{e^{V_n'(i)}} \\ &= \frac{\T_{f_{n+1}} e^{V_n'}(i)}{e^{V_n'(i)}} \\&\leq \frac{\T_{f^*} e^{V_n'(i)}}{e^{V_n'}(i)}.
        }
        
        Therefore,
        \alns{ 
        \ell_n e^{V_n'(i)} \leq e^{\alpha d_{f^*}(i)} \sum_{j\in\S} \Qr{j \mid i,f^*(i)}e^{V_n'(j)} = \pbr{\Qt_{f^*} \pbr{e^{V_n'}}}(i).
        }
        
        It follows that
        \alns{ 
        \ell_n \Qt_{f^*}^k \pbr{e^{V_n'}} \leq \Qt_{f^*}^{k+1} e^{V_n'}.
        }
        
        Similarly, we have
        \alns{
        \frac{\ell_n}{e^{\Tilde{\Lambda}^*}} \pbr{\frac{\Qt_{f^*}}{e^{\Tilde{\Lambda}^*}}}^k e^{V_n'} \leq \pbr{\frac{\Qt_{f^*}}{e^{\Tilde{\Lambda}^*}}}^{k+1} e^{V_n'}.
        }
        
        As a consequence of Perron-Frobenius Theorem, it follows that
        \alns{ 
        \ell_n \leq e^{\Tilde{\Lambda}^*},
        }
        
        as desired. This concludes the proof.    \hfill\Halmos
    \end{itemize}
\endproof

The above Lemma is crucial to the proof of convergence of modified policy iteration. A similar relation would hold for the reward maximization problem: $\ell_n \leq e^{\Tilde{\Lambda}_{f_{n+1}}} \leq e^{\Tilde{\Lambda}^*} \leq u_n$. Such a relation can be obtained in the context of risk-neutral average cost (\cite{vanderwal}) as well. The multiplicative nature of Bellman operator combined with the exponential cost formulation, necessitates a different proof idea which hinges on the careful utilization of the Perron-Frobenius Theorem.

% We will use the lemma is to establish a contraction property to prove the convergence of $u_n$ to the optimal cost. Since $u_n$ is lower bounded by $e^{\Tilde{\Lambda}^*}$ and monotonically decreasing, it is possible to show exponential convergence of $u_n$ (and therefore consequently $e^{\Tilde{\Lambda}_{f_{n+1}}}$) to $e^{\Tilde{\Lambda}^*}$ as we show in the next lemma.

\subsection{Step 2: Establishing an appropriate monotonicity property}

\begin{lemma}\label{lemm_mono}
The sequence $u_n$ is non-increasing, i.e. $u_n \leq u_{n-1}$ for all $n$.
\end{lemma}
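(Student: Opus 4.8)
The plan is to reduce everything to two elementary structural properties of the policy operators $\T_f$, both immediate from the defining formula $\T_f e^{V}(i) = e^{\alpha d_f(i)}\sum_{j\in\S}\Q_f(j|i)e^{V(j)}$: (i) \emph{monotonicity}, i.e. $v\le w$ pointwise implies $\T_f v\le \T_f w$ pointwise; and (ii) \emph{positive homogeneity of degree one}, i.e. $\T_f(c\,v)=c\,\T_f v$ for every scalar $c>0$. Both properties are inherited by $\T=\min_{f\in\Pi}\T_f$. A first consequence of (ii) is that the ratio $g_n(i)=\T e^{V_n'}(i)/e^{V_n'}(i)$ — and hence $u_n$ and $\ell_n$ — is unchanged when $e^{V_n'}$ is replaced by any positive scalar multiple of itself. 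In particular the normalization step of Algorithm~\ref{Alg: MPI} has no effect on $u_{n+1}$, so I may compute $g_{n+1}$ directly from $e^{V_{n+1}}=\T_{f_{n+1}}^{m_n}e^{V_n'}$ in place of $e^{V_{n+1}'}$.

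Next I would extract the one-step bound produced by policy improvement. Since $f_{n+1}$ attains the minimum defining $\T e^{V_n'}$, we have $\T e^{V_n'}=\T_{f_{n+1}}e^{V_n'}$, so by the definition of $u_n$,
\[
\T_{f_{n+1}}e^{V_n'}(i)\ \le\ u_n\, e^{V_n'}(i),\qquad \forall i\in\S.
\]
The heart of the argument is to propagate this inequality through the $m_n$ inner fixed-point iterations. I claim that for every integer $m\ge 0$,
\[
\T_{f_{n+1}}^{\,m+1}e^{V_n'}\ \le\ u_n\,\T_{f_{n+1}}^{\,m}e^{V_n'}\qquad\text{(pointwise)}.
\]
This is a one-line induction on $m$: the base case $m=0$ is the displayed inequality, and the inductive step applies $\T_{f_{n+1}}$ to both sides, using (i) to preserve the inequality and (ii) to pull the scalar $u_n$ back outside.

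Taking $m=m_n$ then closes the proof. Since $\T e^{V_{n+1}}\le \T_{f_{n+1}}e^{V_{n+1}} = \T_{f_{n+1}}^{\,m_n+1}e^{V_n'}\le u_n\,\T_{f_{n+1}}^{\,m_n}e^{V_n'}=u_n\,e^{V_{n+1}}$ pointwise, and $e^{V_{n+1}}(i)>0$ for all $i$, dividing gives $g_{n+1}(i)\le u_n$ for every $i$, hence $u_{n+1}=\max_{i\in\S} g_{n+1}(i)\le u_n$.

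I expect the proof to be short once this viewpoint is fixed; the two places needing care are (a) confirming that the normalization genuinely does not change $u_n$ — this is exactly where homogeneity of degree \emph{one} is used, and it is a point where the multiplicative argument diverges from the additive risk-neutral one (where one instead subtracts a reference value at a fixed state), and (b) checking that the induction over the inner iterations is legitimate, i.e. that $\T_{f_{n+1}}$ is indeed monotone and positively homogeneous as a map on $\R_+^{|\S|}$ (it is: coordinatewise it is a nonnegative linear functional of the input, scaled by the fixed positive weights $e^{\alpha d_{f_{n+1}}(i)}$). Note also that only the upper bound $g_n\le u_n$ is invoked, so the monotonicity of $u_n$ requires no control of $\ell_n$.
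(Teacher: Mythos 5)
Your proof is correct and follows essentially the same route as the paper's: both rest on $f_{n+1}$ being the minimizer, on the normalization being a positive scalar multiple (so the ratios $g_n$ are unchanged), and on the partial-evaluation operator preserving the one-step ratio bound. The only difference is presentational --- you propagate $\T_{f_{n+1}}e^{V'_n}\le u_n e^{V'_n}$ forward through the $m_n$ inner iterations by induction on monotonicity and homogeneity, whereas the paper writes $\bigl(\Qt_{f_n}^{m_{n-1}}e^{W'_{n-1}}\bigr)(x^*)\big/\bigl(\Qt_{f_n}^{m_{n-1}}e^{V'_{n-1}}\bigr)(x^*)$ as a convex combination of the coordinatewise ratios and bounds it by their maximum, which is the same positivity fact in different packaging.
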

\proof{Proof.}
{
Recall that 
\alns{ 
u_n = \max_i \frac{\pbr{\T e^{V_n'}}(i)}{e^{V_n'}(i)}.
}

Let $x^* = \argmax_i \frac{\pbr{\T e^{(V_n')}}(i)}{e^{V_n'}(i)}$, so that
\alns{ 
u_n & = \frac{\pbr{\T e^{(V_n')}}(x^*)}{e^{V_n'}(x^*)}  = \frac{\pbr{ \Qt_{f_{n+1}}\pbr{e^{(V_n')}}(x^*)}}{e^{V_n'}(x^*)}  \leq \frac{\pbr{ \Qt_{f_{n}}\pbr{e^{(V_n')}}(x^*)}}{e^{V_n'}(x^*)}.
}

Since $e^{V_n'(j)} = \frac{e^{V_n(j)}}{\sum_{j\in\S} e^{V_n(j)}}$,
% \mr{what is $d_n$?}
it follows that
\alns{ 
u_n &\leq \frac{\Qt_{f_n}\pbr{ e^{V_n}(x^*)}}{e^{V_n(x^*)}} \\
&= \frac{\Qt_{f_n} \Qt_{f_n}^{m_{n-1}} e^{V'_{n-1}} (x^*)}{ \Qt_{f_n}^{m_{n-1}} e^{V'_{n-1}}(x^*)}  \\
&= \frac{ \pbr{ \Qt_{f_n}^{m_{n-1}} \pbr{\Qt_{f_n} e^{V'_{n-1}}}}(x^*)}{\pbr{\Qt_{f_n}^{m_{n-1}}e^{V'_{n-1}}}(x^*)}.
}

Let $e^{W'_{n-1}} = \Qt_{f_n} e^{V'_{n-1}}$ and $H_n = \Qt_{f_n}^{m_{n-1}}$. It then follows that
\alns{ 
u_n &\leq \frac{ \pbr{H_n e^{W'_{n-1}}}(x^*)}{\pbr{H_n e^{V'_{n-1}}}(x^*)}\\
&= \frac{\sum_{j\in\S} H(j \mid x^*) e^{W'_{n-1}(j)}}{\sum_{j\in\S} H(j \mid x^*) e^{V'_{n-1}(j)}}.
}

Let $p = \argmax_i \frac{e^{W'_{n-1}(i)}}{e^{V'_{n-1}(i)}}$. Then, we have
\alns{ 
\frac{e^{W'_{n-1}(p)}}{e^{V'_{n-1}(p)}} \geq \frac{e^{W'_{n-1}(i)}}{e^{V'_{n-1}(i)}} \quad \forall \ i \in \S.
}

This yields
\alns{ 
u_n \leq \frac{ \sum_{j\in\S} \pbr{H(j\mid x^*) e^{V'_{n-1}(j)} \cdot \frac{e^{W'_{n-1}(p)}}{e^{V'_{n-1}(p)}}  } }{ \sum_{j\in\S} H(j\mid x^*) e^{V'_{n-1}(j)}}.
}

Therefore,
\alns{ 
u_n &\leq \frac{e^{W'_{n-1}(p)}}{e^{V'_{n-1}(p)}} \\&= \frac{\Qt_{f_n} \pbr{e^{V'_{n-1}}}(p)}{e^{V'_{n-1}(p)}} \\
 &= \max_i \frac{ \pbr{\T e^{V'_{n-1}}}(i)}{e^{V'_{n-1}(i)}} \\& = u_{n-1},
}

which establishes the desired monotonicity. \hfill\Halmos
}
\endproof

Value Iteration leads to monotonicity in $u_n$(non-increasing) and $\ell_n$(non-decreasing). This is a consequence of improving the policy at every iteration without any partial policy evaluation. This symmetric monotonicity  leads to an overall span contraction in the value function. However, due to partial policy evaluation in modified policy iteration, such a monotonicity is observed only for the maximum of the ratio of iterates, ie., $u_n$ (or $\ell_n$ in case risk sensitive reward maximization). Consequently, there need not be a span contraction for the value functions. Hence it is necessary to rely on arguments independent of span in order to prove algorithm convergence. This approach is delineated in the theorem below. 

\subsection{Step 3: Identifying a source of contraction}

A crucial component to the convergence of the algorithm is a source of contraction, which is obtained from any finite product of ergodic matrices.  
\begin{lemma}\label{lem2}
There exists a finite natural number $R$ such that for any sequence of policies $f_1$, $f_2$, $\cdots$, $f_R \in \Pi$,
\begin{equation}
    \min_{i,j\in \S} \Q_{f_1}\Q_{f_2}\cdots\Q_{f_R}(j|i) > 0.
\end{equation}
\end{lemma}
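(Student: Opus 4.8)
The plan is to exploit two structural features of the transformed kernel $\Q$. First, every $\Q_f$ has a uniformly positive self-loop probability, namely $\Q_f(i|i)\ge \kappa/\bigl((1-\kappa)e^{\alpha\overline c}+\kappa\bigr)>0$ for all $i$ and $f$, as already observed in the text. Second, for $j\neq i$ the numerator in the definition of $\Q(j|i,a)$ is $(1-\kappa)e^{\alpha c(i,a)}\P(j|i,a)$, so $\Q_f(j|i)>0$ if and only if $\P_f(j|i)>0$; thus the off-diagonal support of $\Q_f$ coincides with that of $\P_f$. Since each $\P_f$ is irreducible, the directed graph $G_f$ on $\S$ with an edge $i\to j$ whenever $\Q_f(j|i)>0$ is strongly connected and, in addition, contains every self-loop.

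First I would translate the claim into a reachability statement: for a sequence $f_1,\dots,f_R$, the entry $\Q_{f_1}\cdots\Q_{f_R}(j|i)$ is strictly positive precisely when there is a walk $i=i_0\to i_1\to\cdots\to i_R=j$ with $i_{k-1}\to i_k$ an edge of $G_{f_k}$ for each $k$. Fix the start state $i$ and let $S_k\subseteq\S$ be the set of states reachable from $i$ by such a length-$k$ walk, with $S_0=\{i\}$. Because each $G_{f_k}$ contains all self-loops, a walk can always idle in place, so $S_{k-1}\subseteq S_k$ for every $k$.

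The key step is to show the inclusion is strict as long as $S_{k-1}\neq\S$. Since $G_{f_k}$ is strongly connected, every proper nonempty subset of $\S$ has an edge of $G_{f_k}$ leaving it (take $u$ inside, $v$ outside, and a $u\rightsquigarrow v$ walk, which must cross the boundary); hence there are $u\in S_{k-1}$ and $v\notin S_{k-1}$ with $u\to v$ in $G_{f_k}$, so $v\in S_k$ and $|S_k|\ge|S_{k-1}|+1$. Therefore $S_k=\S$ once $k\ge|\S|-1$, and this bound is uniform in $i$ and in the policy sequence. Taking $R=|\S|$ (any $R\ge|\S|-1$ works, and $|\S|$ also covers the trivial single-state case) then yields $\Q_{f_1}\cdots\Q_{f_R}(j|i)>0$ for all $i,j\in\S$ and all $f_1,\dots,f_R\in\Pi$, and hence the stated minimum is positive.

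I do not anticipate a serious obstacle; the only point requiring care — and precisely the reason the aperiodicity transformation is introduced — is that without a uniformly positive diagonal the reachable sets $S_k$ need not be monotone, and the number of steps needed to make some power of a single $\P_f$ strictly positive would depend on $f$ (via its period/mixing), so no single $R$ would work against an adversarial, possibly oscillating choice of $f_1,\dots,f_R$. The transformation removes this dependence and makes the monotone-reachable-set argument go through with a bound depending only on $|\S|$.
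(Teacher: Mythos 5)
Your argument is correct, but it proves the lemma by a genuinely different route than the paper, and a sharper one. The paper's proof is a pigeonhole argument over the (finite) policy class: with $R=(|\S|-1)\cdot|\A|^{|\S|}+1$ some single policy $f$ must appear at least $|\S|$ times in the sequence, and one then uses the irreducibility of $\Q_f$ to travel between any two states at those repeated time steps while the positive self-loops of the transformed kernel let the walk idle at all the other time steps. Your proof never looks for a repeated policy: you observe that \emph{every} $\Q_{f_k}$ has a strongly connected support graph containing all self-loops, so the set of states reachable from a fixed $i$ after $k$ steps is nondecreasing and must strictly grow until it equals $\S$, giving the uniform bound $R=|\S|$. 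The two key facts you use (off-diagonal support of $\Q_f$ equals that of $\P_f$; uniformly positive diagonal) are exactly the ones the paper relies on, but your reachable-set argument exploits the connectivity of all the kernels in the sequence rather than just one, and consequently yields $R=|\S|$ in place of the paper's $R=(|\S|-1)\cdot|\A|^{|\S|}+1$ --- an exponential improvement in the constant, which propagates into the contraction parameters $k$ and $\gamma$ of Theorem~\ref{thm_convg} and the boundedness constant in Lemma~\ref{lemm_bound}. The one step worth spelling out fully is the boundary-crossing claim (every proper nonempty subset of a strongly connected digraph has an outgoing edge), but your parenthetical justification of it is already adequate.
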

% \pf{
% The proof of this lemma can be found in Appendix \ref{A1}.
% }
\proof{Proof.}
{
Given finite state and action spaces, the total number of deterministic policies are given by $|\Pi|=|\A|^{|\S|}$. Since every policy induces an irreducible Markov chain, $\P^{|\S|}_f(j|i)>0 \ \forall \ i,j\in\S, \forall \ f\in\Pi.$ When $R={|\A|}^{|\S|}+1$, there exists a policy that is repeated at least twice in the sequence. Hence, if $R=({|\S|}-1)\cdot {|\A|}^{|\S|} +1$, there exists a policy which is repeated at least ${|\S|}$ times. Since in the transformed model, under every policy the probability of staying in the same state is non-zero, there exists a non-zero probability of traversing from any state to any other state when $R\geq({|\S|}-1)\cdot {|\A|}^{|\S|} +1$ under any sequence of policies.
\begin{flushright}
 \vspace{-6mm}   \Halmos
\end{flushright}
}
\endproof

% Along with the partial policy evaluation and policy improvement steps, we also introduce a normalization step where the value functions are scaled in every iteration. The value function being bounded away from zero is crucial to the convergence of the proof. The normalization step in Algorithm \ref{Alg: MPI} serves this purpose along with ensuring that the magnitude of the iterates do not diverge.

\begin{lemma}\label{lemm_bound}
Let $\max_k m_k < C$, where $m_k$ corresponds to the number of fixed point iterations performed during partial policy evaluation during the $k$th execution of the algorithm. Then, there exists $\beta$ such that $0< \beta <1$, 
\aln{ 
e^{V_m'(i)} > \beta > 0 \ \forall \ m \geq 0.
}
\end{lemma}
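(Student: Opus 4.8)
The plan is to track how the un-normalized iterates $e^{V_{k+1}} = \T_{f_{k+1}}^{m_k} e^{V_k'}$ relate to the normalized ones, and to get a two-sided control on the ratios $g_n(i) = \T e^{V_n'}(i)/e^{V_n'}(i)$ that sit between $\ell_n$ and $u_n$. First I would record the elementary bounds on the transformed cost: since $c_f(i)\in[\underline c,\overline c]$ and $d_f(i) = \tfrac1\alpha\log((1-\kappa)e^{\alpha c_f(i)}+\kappa)$, we have $e^{\alpha d_f(i)} = (1-\kappa)e^{\alpha c_f(i)}+\kappa \in [(1-\kappa)e^{\alpha\underline c}+\kappa,\ (1-\kappa)e^{\alpha\overline c}+\kappa]=:[\underline D,\overline D]$, with $\underline D \geq \kappa > 0$. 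Consequently each single application of any $\T_f$ satisfies $\underline D\cdot(\min_j e^{V(j)}) \le \T_f e^V(i) \le \overline D\cdot(\max_j e^{V(j)})$, and more generally a product of $m$ such operators multiplies the extreme values by at most $\overline D^{\,m}$ and at least $\underline D^{\,m}$ while the stochastic-matrix part keeps everything a convex combination. Because $\sum_i e^{V_k'(i)}=1$ (by the normalization step), we have $\max_i e^{V_k'(i)}\le 1$ and $\min_i e^{V_k'(i)}\le 1/n$, which gives $\max_i e^{V_{k+1}(i)} \le \overline D^{\,m_k} \le \overline D^{\,C}$ and hence after normalization $e^{V_{k+1}'(i)} = e^{V_{k+1}(i)}/\sum_j e^{V_{k+1}(j)}$ with the denominator $\ge \max_i e^{V_{k+1}(i)} \ge$ (something positive); the real content is bounding this denominator from below, equivalently bounding $\max_i e^{V_{k+1}(i)}$ from below away from $0$, uniformly in $k$.

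To get the lower bound I would use Lemma~\ref{lem2}: there is a fixed $R$ such that any product $\Q_{f_1}\cdots\Q_{f_R}$ has all entries $\ge \delta > 0$ for some $\delta$ depending only on $R,\kappa,\overline c$. Writing one step of partial policy evaluation as $\T_f e^V(i) = e^{\alpha d_f(i)} (\Q_f e^V)(i)$, an $m_k$-fold application with $m_k \ge R$ (and I would handle $m_k < R$ separately, or pad with the crude bound $\T_f e^V \ge \kappa \min_j e^V(j)\cdot\mathbf 1$ coming from $\Q_f(i|i)\ge\kappa/\overline D$ — wait, one must be careful that $\Q_f(i\mid i)\ge \kappa/\overline D$, which holds since the remark after the transformation gives $\Q(i\mid i,a)\ge \kappa/((1-\kappa)e^{\alpha\overline c}+\kappa)$) yields $e^{V_{k+1}(i)} \ge \underline D^{\,m_k}\,\delta^{\lfloor m_k/R\rfloor}\cdot(\text{something})$. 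But I want a bound not in terms of $\min_j e^{V_k'(j)}$, which could be tiny, but in terms of $\sum_j e^{V_k'(j)}=1$ or $\max_j e^{V_k'(j)}$: using all-positive entries, $(\Q_{f_1}\cdots\Q_{f_R} e^{V})(i) \ge \delta \sum_j e^{V(j)} = \delta$ when $\sum_j e^{V(j)}=1$, so after one full policy-evaluation block starting from a normalized vector, $e^{V_{k+1}(i)} \ge \underline D^{\,m_k}\,\delta \ge \underline D^{\,C}\delta =: b_1 > 0$ for every $i$ whenever $m_k \ge R$. Then $\sum_j e^{V_{k+1}(j)} \le n\,\overline D^{\,C}$ and $\ge n\,b_1$, so the normalized iterate obeys $e^{V_{k+1}'(i)} \ge b_1/(n\overline D^{\,C}) =: \beta_1 > 0$; the remaining finitely-many "small $m_k$" cases (there is a fixed finite set of possible values $m_k\in\{1,\dots,R-1\}$) each give their own positive lower bound by the same computation with $\delta$ replaced by $(\kappa/\overline D)^{\,m_k}$ applied to $\min_j e^{V_k'(j)}\ge 0$ — actually this forces me instead to note $(\Q_f^{m_k} e^V)(i)\ge (\kappa/\overline D)^{m_k} e^{V}(i)$ is not enough since it only bounds by $e^{V}(i)$ which may be small, so the clean fix is: require the algorithm's $m_k$ to be handled by writing $m_k = qR + s$ and chaining one all-positive block with the remaining $\le \overline D^{\,s}\cdot(\text{convex comb.})$ part, giving $e^{V_{k+1}(i)} \ge \underline D^{\,m_k}\delta$ as soon as $m_k\ge R$, and for $1\le m_k\le R-1$ instead noting that such an iteration is followed eventually (or preceded) — cleanest is simply to assume WLOG or observe from the structure that the bound propagates by induction on $k$: if $e^{V_k'(j)}\ge \beta$ for all $j$ then $e^{V_{k+1}(i)} \ge \underline D^{\,m_k}\,\kappa^{\,m_k}\cdot$(no) — here $\underline D^{m_k}(\Q_{f_{k+1}}^{m_k}e^{V_k'})(i) \ge \underline D^{m_k}\cdot\min_j e^{V_k'(j)} \ge \underline D^{C}\beta$ using only that $\Q_{f_{k+1}}^{m_k}$ is stochastic, and then normalizing gives $e^{V_{k+1}'(i)}\ge \underline D^{C}\beta/(n\overline D^{C})$; choosing $\beta$ small enough that $\underline D^{C}/(n\overline D^{C}) \ge 1$ is impossible, so instead one takes $\beta := \min\{b_1/(n\overline D^C),\ e^{V_0'}\text{-min}\}$ — and the honest argument is the one-block-positivity bound above, which does \emph{not} need induction.

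So the structure I would write is: (i) state the constants $\underline D,\overline D$ and $\delta$ (from Lemma~\ref{lem2}) and $n=|\S|$; (ii) show that after any partial-policy-evaluation block of length $m_k\ge R$ applied to a probability vector, every coordinate of $e^{V_{k+1}}$ is at least $\underline D^{\,C}\delta$ and at most $\overline D^{\,C}$, hence the normalized vector has every coordinate in $[\underline D^{\,C}\delta/(n\overline D^{\,C}),\,1)$; (iii) for $m_k < R$ write the block as $m_k$ single steps, each contributing a factor $\ge \underline D$ on the value part and a stochastic matrix whose product still has a uniformly positive diagonal ($\ge (\kappa/\overline D)^{m_k}\ge(\kappa/\overline D)^{R}$ on the diagonal), so $e^{V_{k+1}(i)}\ge \underline D^{\,m_k}(\kappa/\overline D)^{\,m_k}\,e^{V_k'(i)}$ — still in terms of $e^{V_k'(i)}$, so close the loop by induction, the base case being $\sum_i e^{V_0'(i)}=1$ so $\max_i e^{V_0'(i)}\ge 1/n$ and by relabeling one can assume all entries were already bounded below at step $0$ if the first few $m_k$ happen to be small, or simply note the inductive hypothesis $e^{V_k'(i)}\ge\beta$ yields $e^{V_{k+1}'(i)}\ge (\underline D\kappa/\overline D)^{\,C}\beta/n\cdot(\text{normalization})\ge\beta$ for $\beta$ chosen as the fixed point of this contraction-like inequality; (iv) take $\beta$ to be the minimum of all the finitely many positive lower bounds produced. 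The main obstacle is exactly this bookkeeping in step (iii): the lower bound on one evaluation block of length $<R$ is naturally expressed relative to the previous iterate rather than relative to $1$, so one must set up a self-consistent induction and choose $\beta$ simultaneously satisfying all the inequalities; once $\max_k m_k < C$ is invoked to make every exponent uniformly bounded, this becomes a finite, elementary computation, but it has to be arranged carefully so that normalization (which divides by a sum that is itself only lower-bounded by $n\beta$) does not erode the bound.
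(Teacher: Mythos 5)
Your treatment of the case $m_k\ge R$ is sound and is in the spirit of the paper's argument: within one partial-evaluation block the same kernel $\Q_{f_{k+1}}$ is applied $m_k$ times, Lemma~\ref{lem2} (with all $R$ policies taken equal) makes the product entrywise bounded below by some $\delta>0$, and since the block acts on a probability vector you get $e^{V_{k+1}(i)}\ge \underline{D}^{\,m_k}\delta$ together with $\sum_j e^{V_{k+1}(j)}\le |\S|\,\overline{D}^{\,m_k}$, hence a uniform lower bound after normalization. The genuine gap is the case $m_k<R$, which you flag repeatedly but never resolve: there the single-block estimate is only relative to $\min_j e^{V'_k(j)}$, and the induction you fall back on produces the recursion $\beta\ge c\beta$ with $c=\underline{D}^{\,C}/(|\S|\,\overline{D}^{\,C})<1$, whose only fixed point is $\beta=0$ --- as you yourself observe. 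Nothing in the proposal closes this loop, so as written the lemma is established only under the extra hypothesis $\min_k m_k\ge R$, which the algorithm does not assume.

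The missing idea --- and the way the paper argues --- is to unroll the recursion across \emph{several consecutive iterations of the algorithm} rather than within a single evaluation block. Since each $\T_f$ is positively homogeneous, one can write
$e^{V_k}=\T_{f_k}^{m_{k-1}}\T_{f_{k-1}}^{m_{k-2}}\cdots\T_{f_{k-j+1}}^{m_{k-j}}e^{V'_{k-j}}\big/\bigl(S_{k-1}S_{k-2}\cdots S_{k-j+1}\bigr)$
with $S_l=\sum_s e^{V_l(s)}$, choosing $j$ so that $M:=m_{k-1}+\cdots+m_{k-j}\ge R$ (possible with $j\le R$ and $M< R+C$ because $1\le m_l< C$). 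Lemma~\ref{lem2} now applies to the accumulated product of stochastic kernels, giving $e^{V_k(i)}\ge \underline{D}^{\,M}\delta\big/\bigl(S_{k-1}\cdots S_{k-j+1}\bigr)$ and $\sum_i e^{V_k(i)}\le |\S|\,\overline{D}^{\,M}\big/\bigl(S_{k-1}\cdots S_{k-j+1}\bigr)$; the product of intermediate normalizers is the \emph{same} in both bounds and cancels in $e^{V'_k(i)}=e^{V_k(i)}/\sum_j e^{V_k(j)}$, yielding $e^{V'_k(i)}\ge (\delta/|\S|)(\underline{D}/\overline{D})^{R+C}$ uniformly in $k$. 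This cancellation of the normalization constants across iterations is precisely what your per-iteration bookkeeping cannot reproduce, and it is where the hypothesis $m_k<C$ actually enters (it keeps the exponent $M$ bounded). As a side remark, the paper's own write-up telescopes all the way back to $V'_0$ and states a constant that still depends on $k$; restricting the telescoping to a window of at most $R$ iterations, as above, is what makes the bound uniform, and it is exactly the device your small-$m_k$ case needs.
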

\proof{Proof.}
{
Since $e^{V_k'(i)} = \frac{e^{V_k(i)}}{\sum_{j\in\S} e^{V_k}(j)}$, it follows that $\sum_{j\in\S} e^{V_k'(j)} = 1$. We then have
\alns{
e^{V_k(i)} &= (T^{m_{k-1}}_{f_k} e^{V_{k-1}'})(i). 
% \\
% & = e^{\alpha c_{f_2} (i)} \frac{ \pbr{\mathb{P}_{f_2} e^{V_1}}(i)}{\sum e^{V_1}}. 
}

Let $\underline{d}=\min_{f\in\Pi}d_f$ and $\overline{d}=\max_{f\in\Pi}d_f$. Then,
\alns{
T_{f_k}e^{V_{k-1}'}(i) &= e^{\alpha d_f(i)}\sum_{j\in\S}\Q(j|i,f_k(i))e^{V_{k-1}'(j)}\\
&\geq e^{\alpha\underline{d}}(\Q_{f_k} e^{V_{k-1}'})(i).
}

Iterating,
\alns{
(T^{m_{k-1}}_{f_k} e^{V_{k-1}'})(i) &\geq e^{m_{k-1}\alpha\underline{d}}(\Q^{m_{k-1}}_{f_k} e^{V_{k-1}'})(i) \\
&= \frac{e^{m_{k-1}\alpha\underline{d}}(\Q^{m_{k-1}}_{f_k} e^{V_{k-1}})(i)}{\sum_{j\in\S} e^{V_{k-1}(j)}},
}
\alns{
e^{V_k(i)} \geq \frac{e^{m_{k-1}\alpha\underline{d}}(\Q^{m_{k-1}}_{f_k} e^{V_{k-1}})(i)}{\sum_{j\in\S} e^{V_{k-1}(j)}}.
}

Further iterating, 
\alns{
e^{V_k(i)} \geq \frac{e^{\sum_{l=1}^k m_{k-l}\alpha\underline{d}}(\Q^{m_{k-1}}_{f_k}\Q^{m_{k-2}}_{f_{k-1}}\ldots\Q^{m_{0}}_{f_1} e^{V_{0}'})(i)}{\sum_{j\in\S} e^{V_{k-1}(j)}\sum_{j\in\S} e^{V_{k-2}(j)}\ldots\sum_{j\in\S} e^{V_{1}(j)}}.
}

From Algorithm \ref{Alg: MPI}, for a sufficiently large $k$, we have $\sum_{l=0}^{k-1} m_l>R$.
Defining $H_k = \Q^{m_{k-1}}_{f_k}\Q^{m_{k-2}}_{f_{k-1}}\ldots\Q^{m_{0}}_{f_1}$, Lemma \ref{lem2} yields $\eps = \min_{i,j} H_k(i\mid j) > 0$,
we continue the above sequence of inequalities:
\begin{align}\label{eq64}
   e^{V_k(i)} &\geq \frac{e^{\sum_{l=1}^k m_{k-l}\alpha\underline{d}} \sum_{j\in\S} \eps e^{V_0'(j)}}{\sum_{j\in\S} e^{V_{k-1}(j)}\sum_{j\in\S} e^{V_{k-2}(j)}\ldots\sum_{j\in\S} e^{V_{1}(j)}} \\
&= \frac{e^{\sum_{l=1}^k m_{k-l}\alpha\underline{d}}\eps }{\sum_{j\in\S} e^{V_{k-1}(j)}\sum_{j\in\S} e^{V_{k-2}(j)}\ldots\sum_{j\in\S} e^{V_{1}(j)}}. 
\end{align}

Similarly we obtain, 
\alns{
e^{V_k(i)} \leq \frac{e^{m_{k-1}\alpha\overline{d}}(\Q^{m_{k-1}}_{f_k} e^{V_{k-1}})(i)}{\sum_{j\in\S} e^{V_{k-1}(j)}}.
}

Further iterating, 
\alns{
e^{V_k(i)} \leq \frac{e^{\sum_{l=1}^k m_{k-l}\alpha\overline{d}}(\Q^{m_{k-1}}_{f_k}\Q^{m_{k-2}}_{f_{k-1}}\ldots\Q^{m_{0}}_{f_1} e^{V_{0}'})(i)}{\sum_{j\in\S} e^{V_{k-1}(j)}\sum_{j\in\S} e^{V_{k-2}(j)}\ldots\sum_{j\in\S} e^{V_{1}(j)}}.
}
% \aln{ 
% e^{V_m(i)} &\leq \frac{e^{2\alpha \overline{c} } \pbr{ \mathbb{P}_{f_{m}} \cdot \mathbb{P}_{f_{m-1}} e^{V'_{m-2}} }(i)}{\sum e^{V_{m-1}}} \\
% & \leq \frac{e^{m\alpha \overline{c} } \pbr{ \mathbb{P}_{f_{m}} \cdot \mathbb{P}_{f_{m-1}} \cdots \mathbb{P}_{f_{1}} e^{V_0'}  }(i)}{ \sum e^{V_{m-1} } \cdot \sum e^{V_{m-2}} \cdots \sum e^{V_1} } \\
% & \leq \frac{e^{m\alpha \overline{c}}}{ \sum e^{V_{m-1}} \sum e^{V_{m-2}} \cdots \sum e^{V_1}}.
% }

This implies
\alns{
\sum_{i\in\S} e^{V_k(i)} \leq \frac{ne^{\sum_{l=1}^k m_{k-l}\alpha\overline{d}}}{\sum_{j\in\S} e^{V_{k-1}(j)}\sum_{j\in\S} e^{V_{k-2}(j)}\ldots\sum_{j\in\S} e^{V_{1}(j)}}.
}

Therefore,
\begin{equation}\label{68}
\frac{1}{\sum_{i\in\S} e^{V_k(i)}} \geq \frac{\sum_{j\in\S} e^{V_{k-1}(j)}\sum_{j\in\S} e^{V_{k-2}(j)}\ldots\sum_{j\in\S} e^{V_{1}(j)}}{ne^{\sum_{l=1}^k m_{k-l}\alpha\overline{d}}}.    
\end{equation}

Combining \eqref{eq64} and \eqref{68}, since $\forall l, \ m_l \geq 1 \ \text{and} \ m_l < C $
\begin{align*}
    \frac{e^{V_k(i)}}{\sum_{j\in\S} e^{V_k(j)}} &\geq \frac{e^{\sum_{l=1}^k m_{k-l}\alpha\underline{d}}\eps}{ne^{\sum_{l=1}^k m_{k-l}\alpha\overline{d}}} > \frac{e^{k\alpha\underline{d}}\eps}{ne^{kC\alpha\overline{d}}} > 0.
\end{align*}

We conclude that $e^{V_m'(i)} > \beta > 0$ for all $m$, where $\beta = \frac{e^{k\alpha\underline{d}}\eps}{ne^{kC\alpha\overline{d}}}$. \hfill \Halmos
}
\endproof
% \section{Problem Formulation}\label{formulation}
% The goal of robust reinforcement learning is to find a policy $f\in\Pi$ for which $\Lf = \Lambda^*$. In this work, we focus on developing a modified policy iteration to find such an optimal policy. 
%   as will be seen in the subsequent section.

\subsection{Step 4: Convergence of risk sensitive modified policy iteration}
\label{subsection:proof}
% \section{Convergence Analysis of Algorithm}\label{mainresult}
We restate the theorem for convenience.
\begin{customthm}{2}
Let $g_n, u_n$ and $\ell_n$ be determined from Algorithm \ref{Alg: MPI} as per \eqref{def1}, \eqref{def2} and \eqref{def3} respectively. 

Then, $u_n$ converges exponentially fast, i.e. there exist $\gamma, k$ such that $0<\gamma<1$ and for each $n$:
$$\pbr{ u_n - e^{\Tilde{\Lambda}^*}} \leq \pbr{1-\gamma}\pbr{ u_{n-k} - e^{\Tilde{\Lambda}^*}}. 
$$ 

Consequently, the risk sensitive average cost iterates converge to $\Tilde{\Lambda}^*$, that is,
\aln{ 
\lim_{n\to\infty} u_n = \lim_{n\to\infty} \ell_n = e^{\Tilde{\Lambda}^*}.
}
\end{customthm}

\proof{Proof.}
{ 
By definition of $g_n$, we have
\alns{ 
g_n(i) &= \frac{ \T e^{V_n'}(i)}{ e^{V_n'(i)}} 
= \frac{ \pbr{\Qt_{f_{n+1}} e^{V_n'} }(i)}{ e^{V_n'}(i)} 
\stackrel{(a)}{\leq} \frac{ \pbr{ \Qt_{f_n} e^{V_n'}}(i)}{e^{V_n'}(i)}  \stackrel{(b)}{= } \frac{ \pbr{ \Qt_{f_n} e^{V_n}}(i) }{ e^{V_n(i)}},
}

where $\pbr{\Qt_{f_n} e^{V_n}}(i)= e^{\alpha d_{f_n}(i)}\sum_{j\in\S}\Q(j|i,f_n(i))e^{{V_n}(j)}$ (a) follows from the fact that $f_{n+1}$ is the minimizing policy, and (b) is due to $e^{V'_n(i)} = \frac{e^{V_n(i)}}{\sum_{j\in\S} e^{V_n(j)}}$. 

Using the definition of $e^{V_n}(i)$, we have
\alns{
g_n(i) & \leq \frac{ \pbr{ \pbr{\Qt_{f_n}} \pbr{ \Qt_{f_n}^{m_{n-1}}\cdot e^{V'_{n-1}}}}(i)}{ \pbr{\Qt_{f_n}^{m_{n-1}} e^{V'_{n-1}}}(i)} \\
& = \frac{\pbr{ \Qt_{f_n}^{m_{n-1}} \Qt_{f_n} e^{V'_{n-1}}}(i)}{\pbr{ \Qt_{f_n}^{m_{n-1}} e^{V'_{n-1}}}(i)} \\
& \leq \frac{\pbr{\Qt_{f_n}^{m_{n-1}} \cdot \Qt_{f_{n-1}} e^{V_{n-1}}}(i)}{\pbr{\Qt_{f_n}^{m_{n-1}} e^{V_{n-1}} }(i)} \\
& = \frac{\pbr{\Qt_{f_n}^{m_{n-1}} \Qt_{f_{n-1}}^{m_{n-2}} \Qt_{f_{n-1}} e^{V'_{n-2}}}(i)}{\pbr{ \Qt_{f_n}^{m_{n-1}} \Qt_{f_{n-1}}^{m_{n-2}} e^{V'_{n-2}}}(i)}.
}

Continuing the above for $k$ time steps, we get
\alns{ 
g_n \leq \frac{ \pbr{ \Qt_{f_{n}}^{m_{n-1}} \Qt_{f_{n-1}}^{m_{n-2}} \Qt_{f_{n-2}}^{m_{n-3}} \cdots \Qt_{f_{n-k+1}}^{m_{n-k}} \Qt_{f_{n-k+1}} e^{V'_{n-k}}  }(i)}{ \pbr{ \Qt_{f_{n}}^{m_{n-1}} \Qt_{f_{n-1}}^{m_{n-2}} \Qt_{f_{n-2}}^{m_{n-3}} \cdots \Qt_{f_{n-k+1}}^{m_{n-k}} e^{V'_{n-k}}}(i)}.
}

Let $H_{n,k} := \Qt_{f_{n}}^{m_{n-1}} \Qt_{f_{n-1}}^{m_{n-2}} \Qt_{f_{n-2}}^{m_{n-3}} \cdots \Qt_{f_{n-k+1}}^{m_{n-k}}$. 
From Lemma \ref{lem2}, we know that $\Q$ induces an irreducible Markov chain for any sequence of policies, i.e.,
\alns{ 
\exists \ R < \infty \text{ such that } \forall \ \pi_1,\cdots,\pi_R \in \Pi \colon \pbr{ \Q_{\pi_1} \Q_{\pi_2} \cdots \Q_{\pi_R}}(j|i) > 0 \quad \forall i,j.
}

The number of time steps $k$ is determined such that $m_{n-1} + m_{n-2} + \cdots + m_{n-k} \geq R$. This implies that $H_{n,k}(j\mid i) > 0$ for all $i,j$.

Let $e^{W'_{n-k}}:=\Qt_{f_{n-k+1}} e^{V'_{n-k}}$. We have
\alns{ 
g_n(i) &\leq \frac{ \pbr{H_{n,k} e^{W'_{n-k}}}(i)}{ H_{n,k} e^{V'_{n-k}}(i)} \\
& = \frac{ \sum_{j\in\S} H_{n,k} (j\mid i) e^{W'_{n-k}(j)}}{\sum_{\ell\in\S} H_{n,k} \pbr{\ell\mid i} e^{V'_{n-k}(\ell)}} \\
& = \frac{ \sum_{j\in\S} \pbr{ H_{n,k}(j\mid i) e^{W'_{n-k}(j) } }}{\sum_{\ell\in\S} H_{n,k}(\ell\mid i)e^{V'_{n-k}(\ell)}} \\
& = \frac{\sum_{j\in\S} \pbr{ H_{n,k}(j\mid i)  e^{V'_{n-k}(j)}} \cdot \pbr{ \frac{e^{W'_{n-k}(j)}}{e^{V'_{n-k}}(j)}} }{ \sum_{\ell\in\S} H_{n,k}\pbr{\ell\mid i} e^{V'_{n-k}(\ell)}}.
}

Define a probability measure $q$ as follows:
\alns{ 
q(j\mid i) := \frac{ H_{n,k} (j\mid i) e^{V'_{n-k}}(j)}{ \sum_{\ell\in\S} H_{n,k} \pbr{\ell\mid i} e^{V'_{n-k}}(\ell)}.
}

Notice that $0 < q(j\mid i) < 1$ since $H_{n,k}(j\mid i) > 0$ for all $i,j$ and $0 < \beta < e^{V'_{n-k}(i)} \leq 1$ (from Lemma \ref{lemm_bound}) for all $i\in\S$. Therefore,
\alns{ 
g_n(i) \leq \sum_{j\in\S} q(j\mid i ) \pbr{ \frac{ \pbr{\Qt_{f_{n-k+1}} e^{V'_{n-k}}}(j)}{ e^{V'_{n-k}}(j)}} = \sum_{j\in\S} q(j\mid i) \pbr{ \frac{\T e^{V'_{n-k}}(j)}{e^{V'_{n-k}}(j) }}.
}

Let $\gamma := \min_{i,j} q(j\mid i) > 0$. We have
\alns{ 
g_n(i) \leq \gamma \ell_{n-k } + \pbr{1-\gamma} u_{n-k} \quad \forall  \ i.
}
% This implies
\aln{ 
\implies u_n \leq \gamma \ell_{n-k} + \pbr{1-\gamma} u_{n-k}.
\label{eq: ordering} }

Since $\ell_{n-k} \leq e^{\Tilde{\Lambda}^*}$, we have
\alns{ 
u_n \leq \gamma e^{\Tilde{\Lambda}^*} + \pbr{1-\gamma} u_{n-k}.
}

Therefore,
\aln{ 
\pbr{ u_n - e^{\Tilde{\Lambda}^*}} \leq \pbr{1-\gamma}\pbr{ u_{n-k} - e^{\Tilde{\Lambda}^*}}. 
\label{eq: x}}

Since $u_n \leq u_{n-1}$ from Lemma \ref{lemm_mono} and $u_n \geq e^{\Tilde{\Lambda}^*}$ from Lemma \ref{lem_VI}, it follows from~\eqref{eq: x} that
\alns{ 
u_n \longrightarrow e^{\Tilde{\Lambda}^*}.
}

From~\eqref{eq: ordering}, we obtain
\alns{ 
e^{\Tilde{\Lambda}^*} \leq \gamma \ell_{n} + \pbr{1-\gamma} u_{n}.
}

Therefore,
\alns{ 
e^{\Tilde{\Lambda}^* } - u_{n} \leq \gamma \pbr{\ell_{n} - u_{n}},
}

which yields
\alns{ 
0 \leq \gamma \pbr{u_{n} - \ell_{n}} \leq \pbr{u_{n} - e^{\Tilde{\Lambda}^*}}.
}

Since $u_{n} \to e^{\Tilde{\Lambda}^*}$, we conclude that $\ell_n \to u_n \implies \ell_n \to e^{\Tilde{\Lambda}^*}$ as desired. \hfill\Halmos
}
\endproof
From Theorem \ref{theorem2} we can equivalently obtain the original optimal risk sensitive $\Lambda^*$ average cost and the corresponding value function associated with it. Note that if $\frac{\pbr{\T e^{V}}(i)}{e^{V(i)}}=\delta>0$, then  the transformation in Equation \eqref{ogcost} provides a $\Lambda$ which is in a $\delta$-scaled neighbourhood of $\Lambda^*$. More details can be found in \cite{cadena}.  

\section{Simulations}\label{simulations}
In this section, we present simulation results to empirically investigate the convergence behavior of risk sensitive modified policy iteration. Through these simulations, we aim to gain a deeper understanding of how modified policy iteration converges in various scenarios and settings. We primarily test for the following metrics:
\begin{enumerate}
    \item The time taken for modified policy iteration to converge across an array of risk sensitivity factors $\alpha$. We plot these times together with the time taken by value iteration and policy iteration algorithms to show the better performance of modified policy iteration across all values of $\alpha$. We also present these plots for varying values of $m_k.$
    \item We study the empirical dependence of total time taken for convergence of the modified policy iteration algorithm as a function of $m_k$. We study this behaviour for various values of $\alpha$ as well. 
    \item We study the improvement in convergence time as a function of state space and action space cardinalities as a function of $\alpha$, for a fixed value of $m$.
\end{enumerate}

\subsection{Setup}
We consider a Markov decision process with $|\S|$ states and $|\A|$ actions. Transition kernels of dimension $|\S|$ $\times$ $|\S|$ are generated; each row in the transition kernel can take one of $|\A|$ possible values corresponding to the actions. A cost matrix of dimension $|\S|$ $\times$ $|\A|$ is also randomly generated with values in $[0,1].$ 

\textbf{Convergence Metric:} Three algorithms, namely value iteration, policy iteration and modified policy iteration are simulated. The convergence criteria for all three algorithms are identical. Specifically, the convergence is achieved when the absolute difference between two successive value function iterates falls below 1e-7.

\subsection{Results}
In Figure \ref{fig:m} we present the time taken for each of the dynamic programming algorithms to converge as a function of the risk sensitivity factor $\alpha.$ Each plot corresponds to a different value of the input parameter $m$, in Algorithm~\ref{Alg: MPI}. Note that $m_k=m$ for all $k\in\N$. 
% \newpage
\begin{figure}[H]
\centering
\begin{subfigure}{.5\textwidth}
  \centering
  \includegraphics[width=0.95\linewidth]{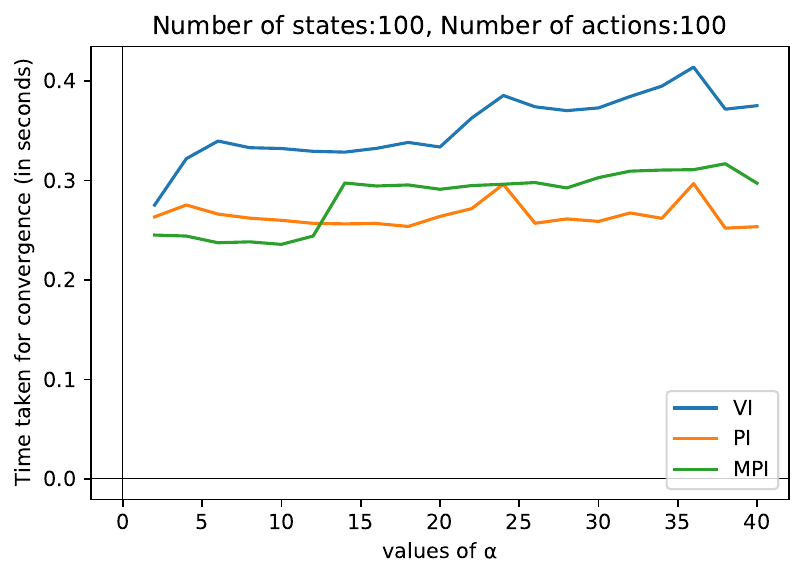}
  \caption[size=10pt]{m=2}
  \label{fig:sub1}
\end{subfigure}%
\begin{subfigure}{.5\textwidth}
  \centering
  \includegraphics[width=0.95\linewidth]{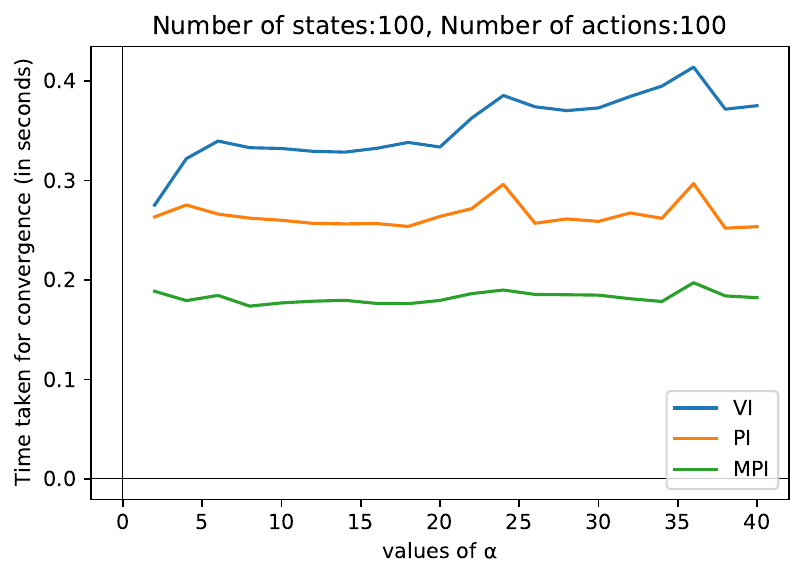}
  \caption[size=10pt]{m=5}
  \label{fig:sub2}
\end{subfigure}
\begin{subfigure}{.5\textwidth}
  \centering
  \includegraphics[width=0.95\linewidth]{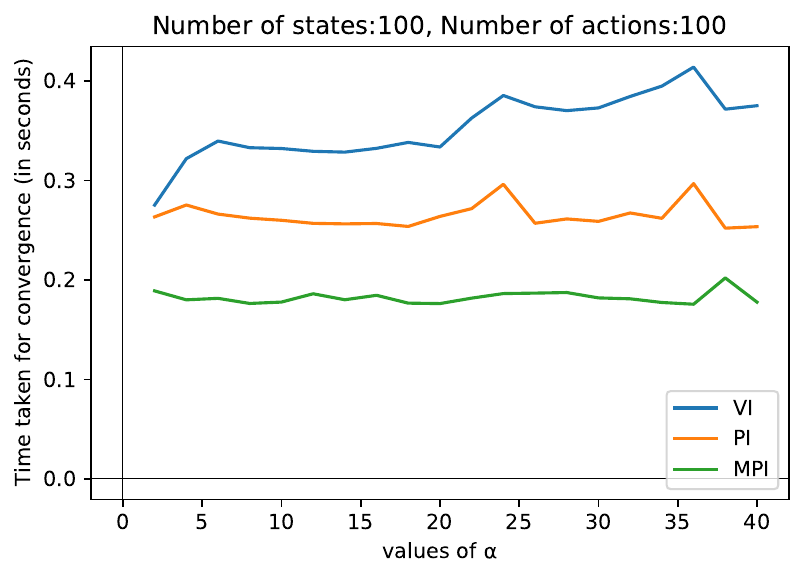}
  \caption[size=10pt]{m=10}
  \label{fig:sub3}
\end{subfigure}%
\begin{subfigure}{.5\textwidth}
  \centering
  \includegraphics[width=0.95\linewidth]{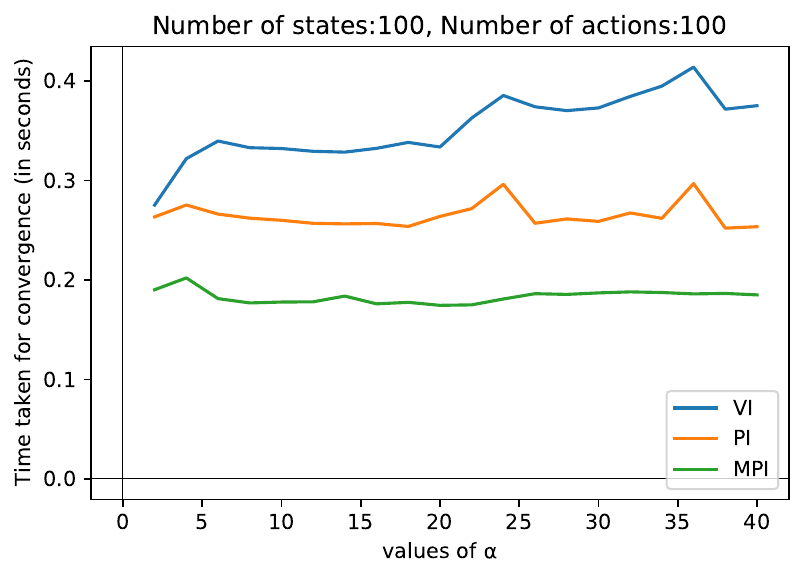}
  \caption[size=10pt]{m=15}
  \label{m=15}
\end{subfigure}
\caption[size=10pt]{Convergence performance of value iteration and policy iteration in comparison with modified policy iteration across various risk sensitivity factors.}
\label{fig:m}
\end{figure}

In Figure \ref{fig:conv} represents the time taken for modified policy iteration to converge, for different values of risk sensitivity parameter $\alpha$, as a function of the algorithm input parameter $m$. 
\begin{figure}[H]
\centering
  \centering
  \includegraphics[width=0.5\linewidth]{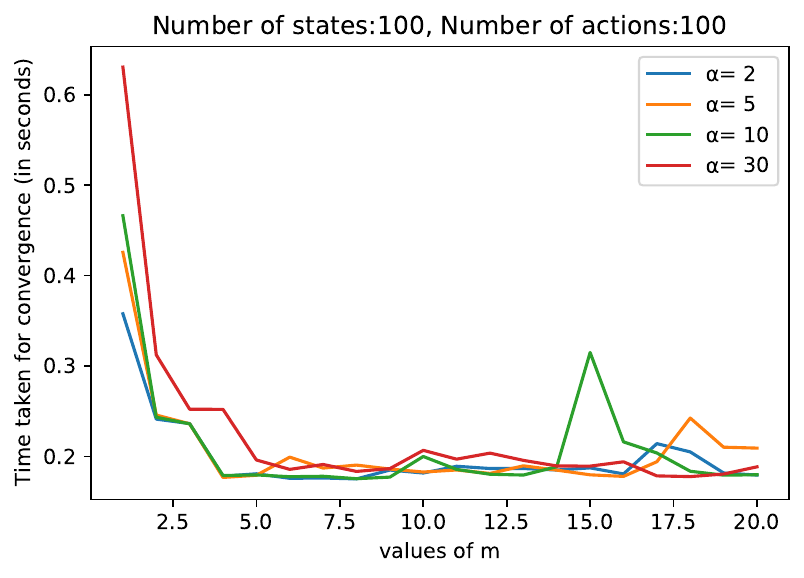}
  \caption{Comparison of time for convergence as a function of $m$}
  \label{fig:conv}
\end{figure}%

In Figure \ref{fig:states} below we present the performance of various dynamic programming algorithms as a function of different risk sensitivity factors $\alpha$ as the cardinality of the state and action spaces increase. 
\begin{figure}[H]
\centering
\begin{subfigure}{.5\textwidth}
  \centering
  \includegraphics[width=0.95\linewidth]{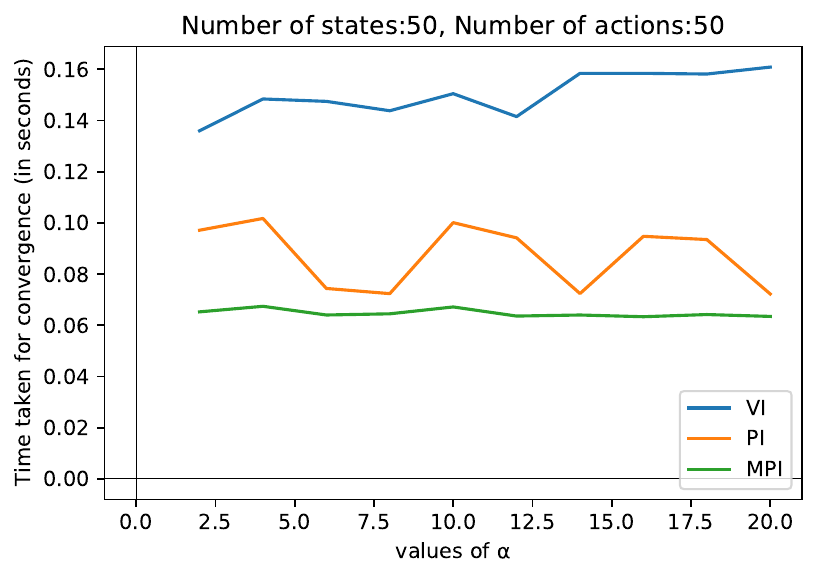}
  \caption{$|\S|=|\A|=50$}
  \label{fig:sub11}
\end{subfigure}%
\begin{subfigure}{.5\textwidth}
  \centering
  \includegraphics[width=0.95\linewidth]{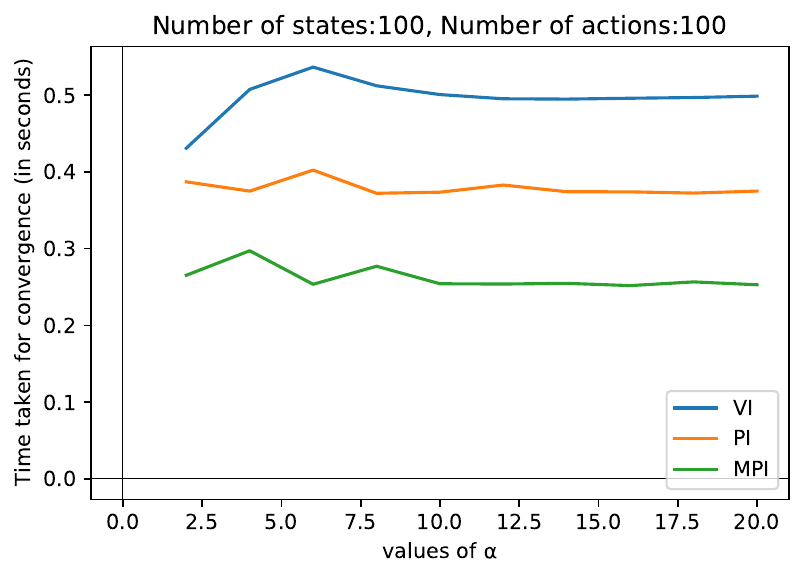}
  \caption{$|\S|=|\A|=100$}
  \label{fig:sub21}
\end{subfigure}
\begin{subfigure}{.5\textwidth}
  \centering
  \includegraphics[width=0.95\linewidth]{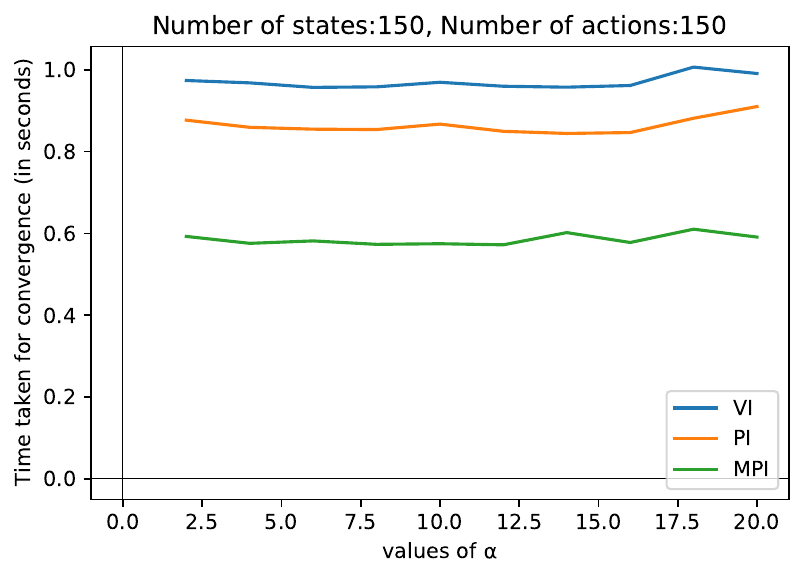}
  \caption{$|\S|=|\A|=150$}
  \label{fig:sub31}
\end{subfigure}%
\begin{subfigure}{.5\textwidth}
  \centering
  \includegraphics[width=0.95\linewidth]{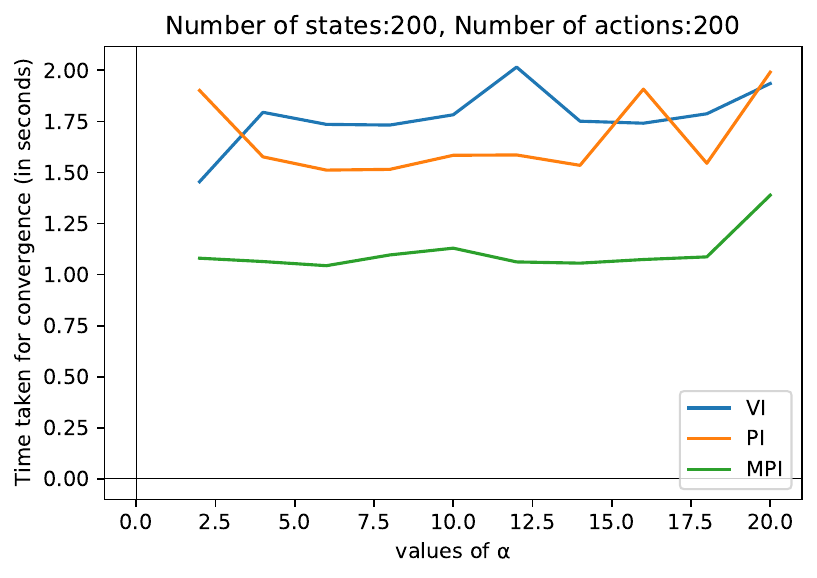}
  \caption{$|\S|=|\A|=200$}
  \label{sub41}
\end{subfigure}
\caption{Convergence performance of value iteration and policy iteration in comparison with modified policy iteration across different state and action space cardinalities.}
\label{fig:states}
\end{figure}

\subsection{Discussion}

Figure \ref{fig:m} shows that for very small values of $m,$ policy iteration outperforms modified policy iteration but once you increase $m,$ modified policy iteration quickly starts outperforming policy iteration. Value iteration seems to be consistently slower than the other two algorithms. From Figure~\ref{fig:conv}, one can see that the running time of modified policy iteration decreases with increasing $m$ but becomes nearly a constant once $m$ reaches a relatively small threshold value. We believe this is due to increased accuracy in policy evaluation being offset by the increased amount of time it takes to evaluate each policy. Figure~\ref{fig:states} shows that the difference between modified policy iteration and the other two algorithms becomes more pronounced as the sizes of the state and action spaces increase.

\section{Conclusion}\label{conclusion}

We presented a modified policy iteration algorithm which can reduce the computational burden of standard policy iteration for risk-sensitive MDPs. We present the proof of convergence for this algorithm, and empirically validate its utility. As in prior work for discounted-cost problems, it would be interesting to investigate if our results can further be used to provide performance guarantees for RL algorithms for risk-sensitive MDPs. 

\section*{Acknowledgments}
This research was supported in part by AFOSR Grant FA9550-24-1-0002, ONR Grant N00014-19-1-2566, and NSF Grants CNS 23-12714, CNS 21-06801, CCF 19-34986, and CCF 22-07547.

\bibliographystyle{informs2014}
\bibliography{myrefs}

\end{document}